\documentclass[11pt]{asiresearch}
\usepackage[T1]{fontenc}
\usepackage{lmodern}

\usepackage{amsmath,amssymb,amsthm}
\usepackage{bm}
\usepackage{booktabs}
\usepackage{enumitem}

\usepackage{graphicx}
\usepackage{fullpage}
\usepackage{wrapfig}
\usepackage{placeins}
\usepackage{hyperref}
\setlist[itemize]{leftmargin=*,topsep=2pt,itemsep=1pt}
\setlist[enumerate]{leftmargin=*,topsep=2pt,itemsep=1pt}

\usepackage{algorithm}
\usepackage{algpseudocode}

\ifdefined\final
  \usepackage[disable]{todonotes}
\else
  \usepackage[textsize=tiny]{todonotes}
\fi

\newcommand{\projectpage}[1]{%
  \begin{center}
    \small
    \vspace{-1.25ex}
    \texttt{Project Page}: \url{#1}
  \end{center}
}

\newcommand{\R}{\mathbb{R}}

\newcommand{\softmax}{\mathrm{softmax}}
\newcommand{\logsumexp}{\mathrm{logsumexp}}
\newcommand{\lse}{\logsumexp}
\providecommand{\argmax}{\operatorname*{arg\,max}}

\newcommand{\gumbel}{\mathrm{Gumbel}}
\newcommand{\cat}{\mathrm{Cat}}
\newcommand{\Unif}{\mathrm{Unif}}

\newcommand{\vct}[1]{\bm{#1}}
\newcommand{\mat}[1]{\bm{#1}}

\title{\huge \bfseries \sffamily FlashSampling: Fast and Memory-Efficient \\Exact Sampling}

\author{
\bfseries \sffamily Tomas Ruiz$^{1}$\thanks{Equal contribution; $^{\dagger}$Corresponding authors.} ~~~
Zhen Qin$^{3}$\footnotemark[1] ~~~
Yifan Zhang$^{2}$\footnotemark[2] ~~~
Xuyang Shen$^{3}$\\[0.5mm]
\bfseries \sffamily Yiran Zhong$^{3}$ ~~~
Mengdi Wang$^{2}$$^{\dagger}$\\[1.5mm]
$^1$LMU Munich \quad
$^2$Princeton University \quad
$^3$FlashSampling
}

\date{}

\begin{document}
\maketitle

\begin{abstract}
Sampling from a categorical distribution is mathematically simple, but in large-vocabulary decoding, it often triggers extra memory traffic and extra kernels after the LM head. We present \textbf{FlashSampling}, an exact sampling primitive that fuses sampling into the LM-head matmul and never materializes the logits tensor in HBM. The method is simple: compute logits tile-by-tile on chip, add Gumbel noise, keep only one maximizer per row and per vocabulary tile, and finish with a small reduction over tiles. 
In tensor-parallel decoding, FlashSampling replaces the all-gather of logits with streaming peer-to-peer writes: This overlaps GPU-to-GPU communication with computation and HBM loads across up to 8 GPUs, with near-ideal scaling at large batch sizes.
Our kernel is exact because $\argmax$ decomposes over partitions; grouped variants for online and tensor-parallel settings are exact by hierarchical factorization of the categorical distribution. FlashSampling demonstrates kernel-level speedups on decode workloads across 4 different datacenter GPUs (H100, H200, B200, B300),
and in end-to-end vLLM experiments, it reduces time per output token by up to $10\%$ on the models we test. These results show that exact sampling, with no approximation, can be integrated into the matmul itself, consolidating the bandwidth-bound sampling step in an efficient epilogue.
\end{abstract}

\projectpage{https://github.com/FlashSampling/FlashSampling}

\begin{figure}[ht!]
\centering
\includegraphics[width=1.0\columnwidth]
{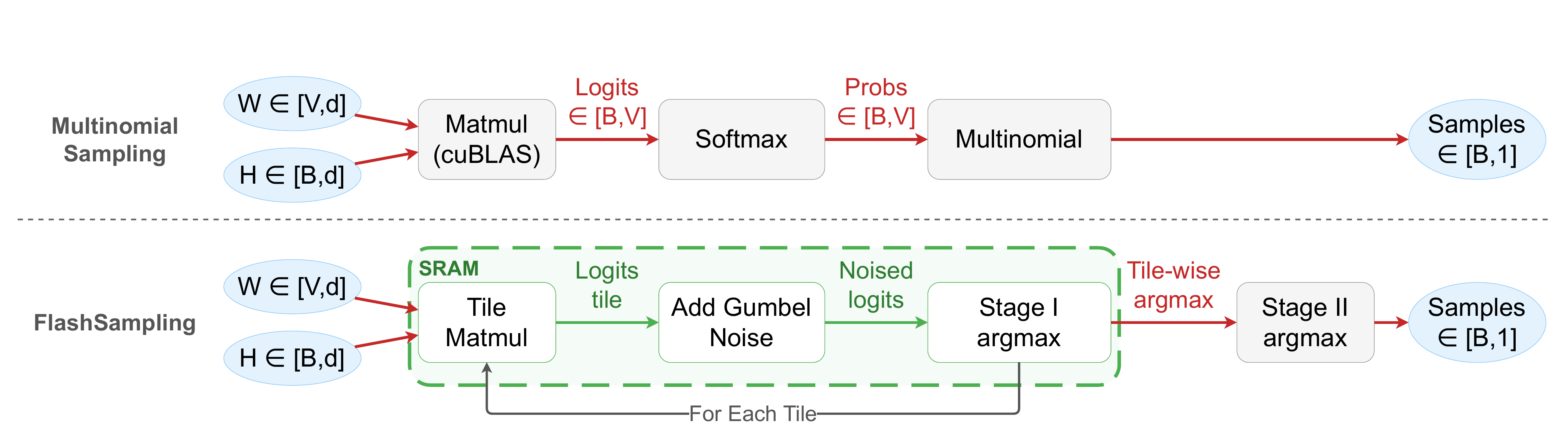}
\vspace{-2ex}
\caption{Conventional multinomial sampling (left) materializes the full $[B,V]$ logits tensor in HBM between the matmul and the sampler. FlashSampling (right) fuses sampling into the matmul epilogue, followed by a lightweight reduction over vocabulary tiles. Logits are computed tile-by-tile in on-chip memory, perturbed with Gumbel noise, and reduced without ever writing the full logits tensor to HBM. Red arrows denote HBM traffic; green arrows denote on-chip data movement.}
\label{fig:architecture}
\end{figure}

\section{Introduction}

Sampling from a categorical distribution is a small mathematical operation, but in large-categorical systems, it can become an expensive inner-loop primitive. Modern LLM serving stacks invoke sampling repeatedly during autoregressive decoding, on vocabularies with tens or hundreds of thousands of categories \citep{kwon2023efficient,ye2025flashinfer,maddison2014astar,huijben2022review}.
Recent measurements confirm the cost: sampling can account for over 10\% of token generation time even on a single GPU \citep{key2024approximate}, and 20--38\% in tensor-parallel settings where logits must be gathered across ranks \citep{zhao2025simpledisaggregatingsamplinggpu}.
The bottleneck is usually not arithmetic, but the chain of separate sampling kernels that materialize, normalize, and scan the logits tensor.

At decode time, the LM-head projection already streams a large $[V,D]$ weight matrix from HBM ($V$=vocabulary size, $D$=hidden dimension). When the active batch is small, this projection is memory-bandwidth bound. 
Materializing the resulting $[B,V]$ logits tensor ($B$=batch size), launching extra kernels to normalize and sample from it, adds extra memory traffic but no useful model computation,
since the logits are immediately discarded after a single sample is drawn.
In this regime, the separate sampler is pure overhead \citep{dao2022flashattention, wijmans2025cut}.
Furthermore, in the multi-GPU tensor-parallel setting, decoding must additionally synchronize and communicate the logits across ranks: Sampling becomes a memory and communication problem when full logits are materialized.
Exact sampling is often described as ``compute softmax, then sample'', which obscures the fact that exact sampling does not require forming probabilities at all.

In this work, we introduce FlashSampling, which computes logits tile-by-tile on chip and writes only one candidate per row and per vocabulary tile, followed by a lightweight reduction.
Exact sampling needs only the index of the largest perturbed logit, so there is no need to form a softmax, a prefix sum, or normalized probabilities; the method is exact and introduces no approximation. A simple hierarchical factorization yields exact online and distributed variants that keep only small summaries in flight and communicate only small summaries across ranks.

Our contributions can be summarized as follows:
\begin{enumerate}[leftmargin=*, itemsep=1pt]
  \item We introduce a two-stage design that computes logits and samples tile-by-tile within the LM-head epilogue, instead of materializing the full logits tensor to HBM.
  In multi-GPU decoding, FlashSampling overlaps cross-GPU communication with the matmul compute and HBM loads and scales near-ideally on up to 8 GPUs.
  \item We separate the two ingredients used in the paper: the fused tiled kernel is exact pathwise by $\argmax$ decomposition over vocabulary tiles, while grouped, online, and distributed variants are exact in distribution by hierarchical factorization through group log-masses.
  \item We demonstrate consistent speedups in the decode regime across four different NVIDIA datacenter GPUs in microbenchmarks and in end-to-end vLLM evaluations. We provide a simple I/O cost model to predict speedups, show that real speedups exceed these predictions, and explain why.
\end{enumerate}

\section{Background}

\paragraph{Notation.}
Let $[V]:=\{1,\dots,V\}$. Let $\tilde{\vct{\ell}}\in(\R\cup\{-\infty\})^V$ denote \emph{transformed logits} after any deterministic operations such as additive bias, temperature scaling, or masking. We assume that each row has at least one finite entry; otherwise, the target categorical distribution is undefined. The target distribution is $p(i)=\frac{\exp(\tilde{\ell}_i)}{\sum_{j=1}^V \exp(\tilde{\ell}_j)}$.
Raw logits $\vct{\ell}$ are the special case $\tilde{\vct{\ell}}=\vct{\ell}$. We denote i.i.d.\ standard Gumbel variables by $g_i\sim \gumbel(0,1)$. 

\subsection{Why Sampling Is Expensive at Scale}

A common LLM sampling pipeline first computes logits with a GEMM (LM-head projection), and then transforms them, normalizes them to probabilities, and finally samples from them. An  example is softmax followed by inverse-CDF sampling. 
Algorithm~\ref{alg:standard_sampling} in the appendix summarizes this pattern.
An alternative to softmax-based sampling is described below in Section~\ref{sec:gumbel-max-trick}.
Nevertheless, all samplers that materialize the logits to HBM must pay the price of at least one logits write, at least one logits re-read, and the price of the sampling work.
This adds avoidable HBM round-trips in the critical path of the memory-bound decode loop.
\[
\text{GEMM }(\text{produce logits})
\;\to\;
\text{write logits to HBM}
\;\to\;
\text{read logits for sampling}.
\]

\paragraph{Distributed logits.}
In the multi-GPU setting, the LM-head weights $\mat{W} \in \R^{V \times D}$ are sharded across $n$ GPUs along the vocabulary dimension $V$ (\emph{column-parallel} sharding of $\mat{W}^\top$~\citep{shoeybi2020megatronlmtrainingmultibillionparameter}).
As a result, each GPU computes only a shard of the full logits $\mat{Y}$.
Before sampling, all GPUs must communicate their logit shards via an all-gather collective, incurring per-GPU communication of order $B \cdot V$ and aggregate costs proportional to the number of GPUs $n$.
This overhead is most pronounced in deployments using high tensor-parallelism, often aimed at minimizing latency.

\subsection{The Gumbel-Max Trick}
\label{sec:gumbel-max-trick}
The classical Gumbel-Max trick states that exact categorical sampling can be performed by adding i.i.d.\ Gumbel noise and taking an $\argmax$:
\begin{theorem}[Gumbel-Max]
Given transformed logits $\tilde{\vct{\ell}}\in(\R\cup\{-\infty\})^V$, exact sampling from $\cat(\softmax(\tilde{\vct{\ell}}))$ is:
$i^\star = \argmax_{i\in[V]} \left(\tilde{\ell}_i + g_i\right),
g_i\sim\gumbel(0,1)\text{ i.i.d.}$

\end{theorem}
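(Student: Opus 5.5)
We prove the statement by computing $\Pr[i^\star = k]$ directly and showing it equals $p(k)$ for every $k\in[V]$.

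\textbf{Reductions.} Let $S=\{i:\tilde\ell_i>-\infty\}$, which is nonempty by the standing assumption.
\begin{itemize}
\item For $i\notin S$ we have $\tilde\ell_i+g_i=-\infty$ almost surely, while every $j\in S$ has a finite perturbed value. Such indices are therefore never the maximizer, and $p(i)=0$ for them as well. This lets us restrict attention to $S$.
\item Since the Gumbel law is continuous, ties among $\{\tilde\ell_i+g_i\}_{i\in S}$ have probability zero. Hence $\argmax$ is almost surely a single index, and the tie-breaking rule does not matter.
\end{itemize}

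\textbf{Main computation.} Recall the standard Gumbel CDF $F(x)=\exp(-e^{-x})$ and density $f(x)=e^{-x}\exp(-e^{-x})$. For $k\in S$, condition on $g_k=x$ and use independence of the $g_j$:
\[
\Pr[i^\star=k]=\int_{\R} f(x)\prod_{j\in S,\,j\neq k}F(x+\tilde\ell_k-\tilde\ell_j)\,dx .
\]
Substituting the explicit forms, the product is $\exp\bigl(-e^{-x}\sum_{j\neq k}e^{\tilde\ell_j-\tilde\ell_k}\bigr)$. Combining it with $f(x)$ gives
\[
\int e^{-x}\exp\bigl(-e^{-x}Z/e^{\tilde\ell_k}\bigr)\,dx,
\qquad Z=\sum_{j\in S}e^{\tilde\ell_j}.
\]
The substitution $u=e^{-x}$ turns this into $\int_0^\infty e^{-uZ/e^{\tilde\ell_k}}\,du=e^{\tilde\ell_k}/Z=p(k)$.

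An equivalent route avoids the integral. Use that $\max_j(\tilde\ell_j+g_j)$ is $\gumbel(\log Z,1)$, or view $e^{-(\tilde\ell_j+g_j)}$ as independent exponentials with rates $e^{\tilde\ell_j}$. The index of the minimum of independent exponentials is then distributed proportionally to the rates. This exponential-race form also anticipates the hierarchical factorization used later in the paper.

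\textbf{Main obstacle.} The step needing the most care is the bookkeeping rather than the integral itself. This means handling the $-\infty$ entries and the measure-zero ties cleanly, and correctly justifying the conditioning on $g_k$ via Fubini. The integral itself is routine once the exponent is collected into $Z$.
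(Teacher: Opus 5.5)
Your proof is correct, and it supplies something the paper does not: the paper gives no proof of this theorem, stating it as a classical result and citing \citet{gumbel1954statistical} and \citet{maddison2014astar}. The nearest argument in the paper is part~1 of Lemma~\ref{lem:max_stability}. Its CDF-product step $\prod_{i\in\mathcal{G}_k}\exp\bigl(-e^{-(t-\tilde\ell_i)}\bigr)=\exp\bigl(-e^{-(t-L_k)}\bigr)$ is the same algebra you use to collect the exponent into $Z$. There, however, it only identifies the law of the maximum \emph{value}, and part~3 of that lemma (the law of the argmax) simply appeals back to the Gumbel-Max trick. Your extra step of integrating against the density of $g_k$ is exactly what makes that chain self-contained. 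Your explicit reduction to $S=\{i:\tilde\ell_i>-\infty\}$ also handles the $-\infty$ entries that the paper's statement admits, which a bare citation leaves implicit; what the citation buys the paper is only brevity. One caution about your alternative route: knowing that $\max_j(\tilde\ell_j+g_j)\sim\gumbel(\log Z,1)$ does not by itself determine the law of the maximizer. To make that route work, apply max-stability to $[V]\setminus\{k\}$ to get $M_{-k}\sim\gumbel(L_{-k},1)$ independent of $g_k$, where $L_{-k}=\log\sum_{j\neq k}e^{\tilde\ell_j}$. Then use that the difference of two independent standard Gumbels is logistic to obtain $\mathbb{P}(\tilde\ell_k+g_k>M_{-k})=e^{\tilde\ell_k}/(e^{\tilde\ell_k}+e^{L_{-k}})=p(k)$. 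The exponential-race version you mention is complete as stated.
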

\noindent This classical result goes back to \citet{gumbel1954statistical} and is widely used in machine learning \citep{maddison2014astar,huijben2022review}.
The trick extends to sampling \emph{without} replacement via the Gumbel-Top-$k$ method \citep{kool2019stochastic}, though not to the common sampling \emph{with} replacement in LLMs.
The key point for this paper is simple: \emph{exact sampling does not require an explicit softmax}. It only requires the index of the largest perturbed logit.

\section{FlashSampling}

We now describe FlashSampling from simplest to most practical form.
The core algorithm is intentionally simple and introduces no approximation: maintain the largest perturbed score seen so far and its index.

\subsection{Exact Sampling via Online Gumbel-Max}


\paragraph{Algorithm.}
Generate i.i.d.\ Gumbels, compute $s_i=\tilde{\ell}_i+g_i$, and return $i^\star=\argmax_i s_i$. The computation can be performed online in a single pass that maintains only the current best score and its index, analogous to the online normalizer calculation for softmax \citep{milakov2018online}. No softmax, no normalization constant, and no prefix sum are required (see Algorithm~\ref{alg:streaming} in the Appendix).

\paragraph{Systems implication.}
Sampling reduces to a single reduction over perturbed logits. This naturally fits GPU reductions and removes the extra normalization and prefix-sum work used by common softmax-based pipelines.

\paragraph{Simplicity.}
The online algorithm keeps only two running state variables per row: the current best perturbed score and the corresponding index. This simplicity is what makes fusion with the LM-head epilogue practical.

\paragraph{GPU parallelization.}
Each threadblock can process one contiguous vocabulary chunk, or \emph{vocabulary tile}. The block computes perturbed scores for that chunk, keeps only the tile-local maximizer, and a small second-stage reduction selects the global maximizer across vocabulary tiles.

\subsection{FlashSampling Algorithm}
\label{sec:fused_mm_sample}

We now consider the common case where logits are produced by GEMM $
\mat{Y} = \mat{H}\mat{W}^\top \in \R^{B\times V}$, where $\mat{H}\in\R^{B\times D}$ are hidden states and $\mat{W}\in\R^{V\times D}$ are LM-head weights. We wish to sample one index per row from $\cat(\softmax(\mat{Y}_{b,:}))$, possibly after deterministic transforms such as temperature scaling, additive bias, or masking.

\paragraph{Goal: avoid materializing $\mat{Y}$.}
FlashSampling performs sampling inside the matmul kernel and writes only one candidate per row and per vocabulary tile, never the full $[B,V]$ logits tensor:
\begin{itemize}
  \item \textbf{Stage 1 (Fused Kernel):} compute one batch tile and one vocabulary tile on chip, apply deterministic transforms, add Gumbel noise, and keep the tile-local maximizer for each row.
  \item \textbf{Stage 2 (Reduction):} reduce over vocabulary-tile candidates to obtain one global sample per row.
\end{itemize}

\begin{algorithm}[ht!]
\caption{FlashSampling (two-stages)}
\label{alg:fused}
\begin{algorithmic}[1]

\Statex \textbf{Input:} Hidden states $\mat{H}\in\R^{B\times D}$
\Statex \hphantom{\textbf{Input:} }LM-head weights $\mat{W}\in\R^{V\times D}$
\Statex \hphantom{\textbf{Input:} }temperature $\tau>0$, optional mask/bias, RNG key

\Statex \textbf{Output:} Samples $\vct{i}^\star\in\{1,\dots,V\}^B$
\Statex

\Statex \textbf{Stage 1 (Fused Kernel):} for each batch tile $\mathcal{B}$ and vocabulary tile $\mathcal{V}_t$ in parallel
\State $\mat{Y}^{(t)}_{b,i} \gets \sum_{d=1}^{D} \mat{H}_{b,d}\, \mat{W}_{i,d}$ for $(b,i)\in\mathcal{B}\times\mathcal{V}_t$ \Comment{tiled matmul over $D$, kept on chip}
\For{each output element $(b,i)\in\mathcal{B}\times\mathcal{V}_t$}
  \State $\tilde{y}_{b,i} \gets \mathrm{transform}\!\left(\mat{Y}^{(t)}_{b,i} \right)$ \Comment{apply temperature, bias, mask}
  \State Draw $u_{b,i}\in(0,1)$ and set $g_{b,i}\gets -\log\!\big(-\log u_{b,i}\big)$
  \State $s_{b,i}\gets \tilde{y}_{b,i}+g_{b,i}$
\EndFor
\For{each row $b\in\mathcal{B}$}
  \State $(m_b^{(t)},\mathrm{idx}_b^{(t)}) \gets \max_{i\in\mathcal{V}_t} s_{b,i}$ \Comment{idx = global vocabulary index}
  \State Write $(m_b^{(t)},\mathrm{idx}_b^{(t)})$ to HBM
  \If{multi-GPU}
    \State Fan-out $(m_b^{(t)},\mathrm{idx}_b^{(t)})$ to peer ranks via P2P
  \EndIf
\EndFor
\If{multi-GPU}
  \State Cross-rank barrier \Comment{P2P writes are not collectives; sync before Stage 2}
\EndIf
\Statex \textbf{Stage 2 (Reduction):}
\For{each row $b\in\{1,\dots,B\}$}
  \State $t^\star \gets \argmax_t m_b^{(t)}$
  \State $i_b^\star \gets \mathrm{idx}_b^{(t^\star)}$
\EndFor
\State \Return $\vct{i}^\star$
\end{algorithmic}
\end{algorithm}

\paragraph{Why the two-stage design is simple.}
The fused stage does all the expensive work in the matmul epilogue. The second stage is only an $\argmax$ over a small candidate buffer of shape roughly $[B,\#\text{vocab tiles}]$. This design is easy to implement and already captures most of the benefit in the decode regime.

\paragraph{Why this avoids softmax.}
The algorithm never forms probabilities and never computes an explicit softmax. Exactness follows because it computes the same maximizer of the perturbed logits that a full Gumbel-Max pass would compute. For further discussion, please refer to~\ref{sec:details of flashsampling}.

\paragraph{Multi-GPU Communication.}
A naive sampler must wait until the end of the GEMM to assemble the full logits $\mat{Y}$ using an all-gather collective.
FlashSampling instead issues per-tile point-to-point writes (\emph{fan-out} pattern) from inside the matmul epilogue.
This broadcasts each tile-local candidate to the other GPUs via P2P (NVLink) as it is produced.
Because this is not a collective, an explicit cross-rank barrier follows the kernel before Stage 2.
The benefit is that GPU-to-GPU communication overlaps with the matmul's compute and HBM loads, instead of being deferred until after the GEMM.

The complete algorithm is provided in Algorithm~\ref{alg:fused}, and a more detailed analysis and proof of its correctness can be found in Sections~\ref{sec:details of flashsampling}, ~\ref{sec:theory}.

\subsection{IO Cost Model and Predicted Speedup}
\label{sec:cost_model}
We outline a simple model to reason about speedups. It consists of comparing the theoretical minimal data movement of FlashSampling to the baseline, and inferring speedups from the ratio.

\paragraph{Baseline Cost:}
The baseline runs two steps: (1) computing logits with a GEMM, and (2) sample from the logits.
The GEMM must read the weights $\mat{W}$ and hidden states $\mat{H}$ and write the logits $\mat{Y}$ once. An (idealized) sampling step reads the logits $\mat{Y}$ once, and writes the sampled index $i^\star$ once.
In practice, the baseline uses multiple kernels for sampling, not one.
The total data movement $M_{\mathrm{baseline}}$ from HBM is therefore:
\[
M_{\mathrm{baseline}} = \overbrace{\underbrace{VD}_{\text{read } \mat{W}} + \underbrace{DB}_{\text{read } \mat{H}} + \underbrace{VB}_{\text{write } \mat{Y}}}^{\text{GEMM}} + \overbrace{\underbrace{VB}_{\text{read } \mat{Y}} + \underbrace{B}_{\text{write } i^\star}}^{\text{sampling}}
\]

\paragraph{FlashSampling Cost:}
Sampling is fused into the GEMM epilogue, so the logits write and read are eliminated.
By avoiding the $\mat{Y}$ round-trip, fusion reduces data movement and improves arithmetic intensity.
\[
M_{\mathrm{fused}} = \overbrace{\underbrace{VD}_{\text{read } \mat{W}} + \underbrace{DB}_{\text{read } \mat{H}} + \underbrace{B}_{\text{write } i^\star}}^{\text{fused GEMM + sampling}}
\]

\paragraph{Predicted speedup:}
The IO-model speedup is the ratio of the two costs $M_{\mathrm{baseline}}$ and $M_{\mathrm{fused}}$.
Since both $1/V \approx 0$ and $D/V \approx 0$ for current LLMs, the speedup equation simplifies to:
\[
\frac{M_{\mathrm{baseline}}}{M_{\mathrm{fused}}}
= \frac{VD + DB + 2VB + B}{VD + DB + B}
= 1 + \frac{2}{D/B + D/V + 1/V}
\approx\; 1 + \frac{2B}{D}.
\]

The predicted speedup increases with batch size $B$ and decreases with model hidden size $D$, i.e.\ smaller models experience larger speedups.
We verify the IO model empirically by running FlashSampling with an optional flag that stores the computed logits back to HBM, isolating the logits data movement overhead with no other changes to the kernel.
The measured overhead tracks the predicted $2B/D$ ratio (Appendix~\ref{app:logits_ablation}, Table~\ref{tab:logits_ablation}).
Despite the IO model being accurate, FlashSampling achieves much larger speedups over the baseline than the IO model predicts, which we review in Section~\ref{sec:experiments}.

\section{Experiments}
\label{sec:experiments}

We evaluate FlashSampling at two levels: kernel-level microbenchmarks that isolate fused matmul-plus-sample across four GPU architectures, and end-to-end vLLM~\citep{kwon2023efficient} integration that measures autoregressive decode latency.
We implement FlashSampling using Triton~\citep{tillet2019triton}; the implementation source code and experiment scripts are provided in the anonymized supplementary zip archive.

\subsection{Setup}
\label{sec:setup}

\paragraph{Hardware.}
Kernel microbenchmarks are run on four NVIDIA datacenter GPUs spanning two architecture generations (Hopper and Blackwell). Table~\ref{tab:gpu_specs} summarizes their specifications. All GPUs are provisioned via Modal cloud with at least 16 CPU cores.

\paragraph{Software.}
PyTorch 2.11.0~\citep{paszke2019pytorch}, CUDA 13.0, Triton 3.6, and FlashInfer 0.6.9~\citep{ye2025flashinfer}.
To measure kernel runtime, we use NVIDIA CUDA Profiling Tools Interface (CUPTI) in the single-GPU setting, and CUDA events in the multi-GPU setting, for compatibility.
All kernels are warmed up for 25 iterations before timing.
Inputs and weights are in BF16.

\paragraph{Workload configuration.}
The main text focuses on the decode-centric configuration ($D=4{,}096; V=151{,}936$),
which matches models such as Qwen3-8B and Qwen3-235B-A22B MoE~\citep{yang2025qwen3}. We sweep batch sizes $B\in\{1,2,4,8,16,32,64,128,256\}$. Additional results for a larger configuration $D=8{,}192; V=128{,}256$ show the same qualitative trends (Appendix~\ref{app:large_config}).

We have the following baselines to be compared:
\begin{enumerate}
  \item \textbf{Multinomial Sampling.} This baseline computes the logits using a matmul (cuBLAS), saves materialized logits to HBM, and samples with softmax and multinomial. We apply \texttt{torch.compile} to it, which improves speed by 11\% on average over PyTorch eager (range: 5--17\% across GPUs and batch sizes). Unless explicitly stated, all references to Multinomial Sampling refer to the compiled version.
  \item \textbf{FI1 (FlashInfer top-$k$/top-$p$).} \texttt{top\_k\_top\_p\_sampling\_from\_logits}\footnote{\label{fn:flashinfer}\url{https://docs.flashinfer.ai/api/sampling.html}}, a sampling kernel used by vLLM for top-$k$/top-$p$ decode. Logits are also computed using cuBLAS and materialized on HBM. For fair comparison, we deactivate top-k and top-p to prevent unnecessary work (\texttt{top\_k=-1}, \texttt{top\_p=1.0}).
  \item \textbf{FI2 (FlashInfer Gumbel-Max).} \texttt{sampling\_from\_logits}\textsuperscript{\ref{fn:flashinfer}}, FlashInfer's exact Gumbel-Max sampler on materialized logits (skipping softmax normalization). Logits computed using cuBLAS, and materialized on HBM.
\end{enumerate}

\paragraph{Multi-GPU setting.}
We compare FlashSampling against the 3 baselines in the multi-GPU distributed setting (tensor parallel) with 2, 4, and 8 GPUs.
For these experiments, we use a larger hidden size ($V=128{,}256; D=8{,}192$) to mirror the size of real models run in tensor parallel mode, like Llama3 70B~\citep{grattafiori2024llama} and DeepSeek V3.
The baselines are \texttt{torch.compiled} with full graphs, including the all-gather collective operation\footnote{Implemented with the compile-friendly ops in \texttt{torch.distributed.functional\_collectives}.}.

\subsection{Microbenchmarking Results}
\label{sec:fused_benchmarks}

Table~\ref{tab:speedup_kernel} and Figure~\ref{fig:relative_perf} report FlashSampling speedups relative to the three baselines.
A relative speedup of~\emph{e.g.} 2$\times$ means that FlashSampling takes half the runtime as the baseline.
We observe that FlashSampling is faster than all baselines across all batch sizes on the B200 GPU.
The complete data for all four datacenter GPUs is in Appendix~\ref{app:large_config}.

\begin{figure}[htbp]
\centering
\includegraphics[width=0.48\columnwidth]{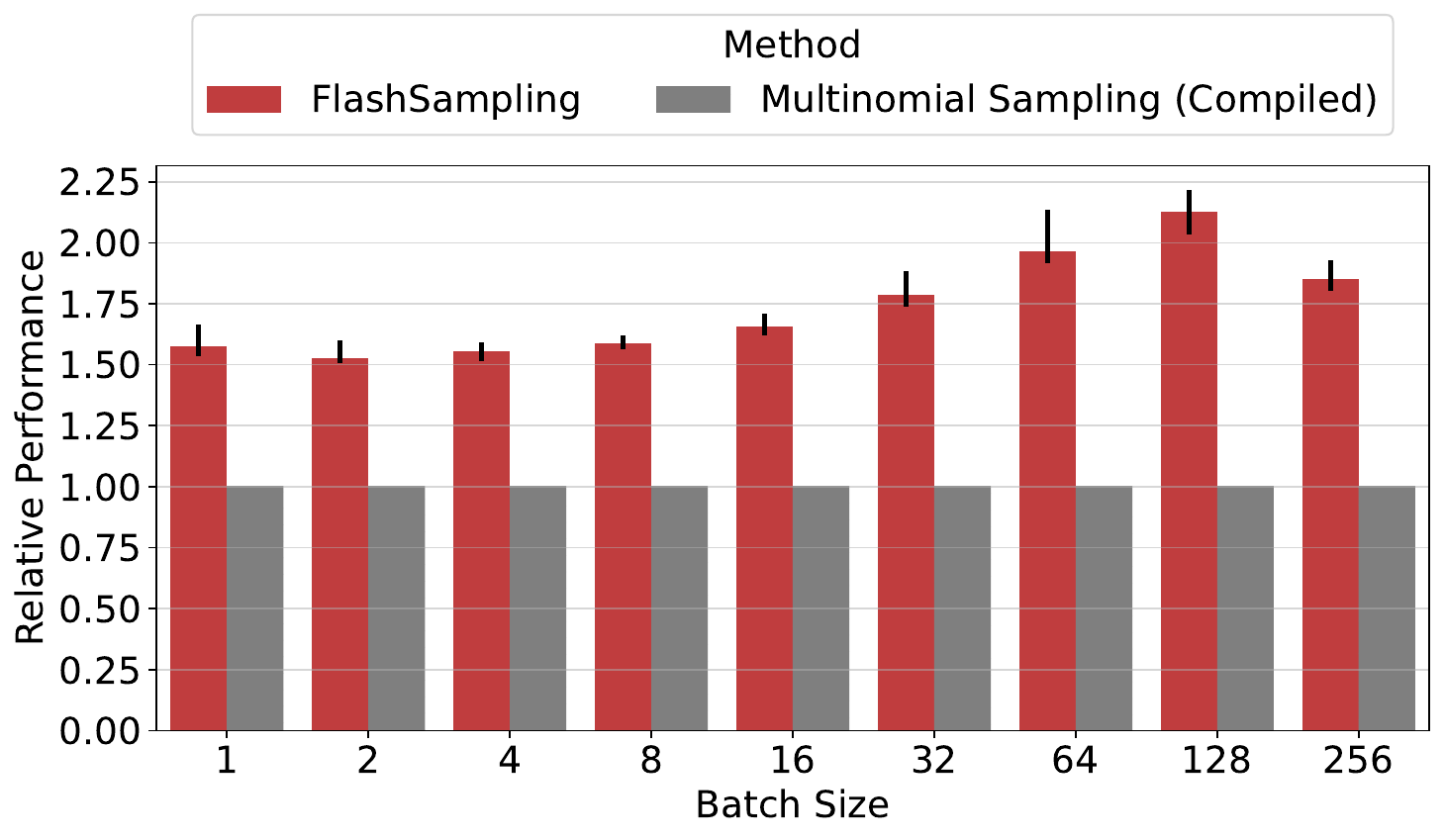}
\hfill
\includegraphics[width=0.48\columnwidth]{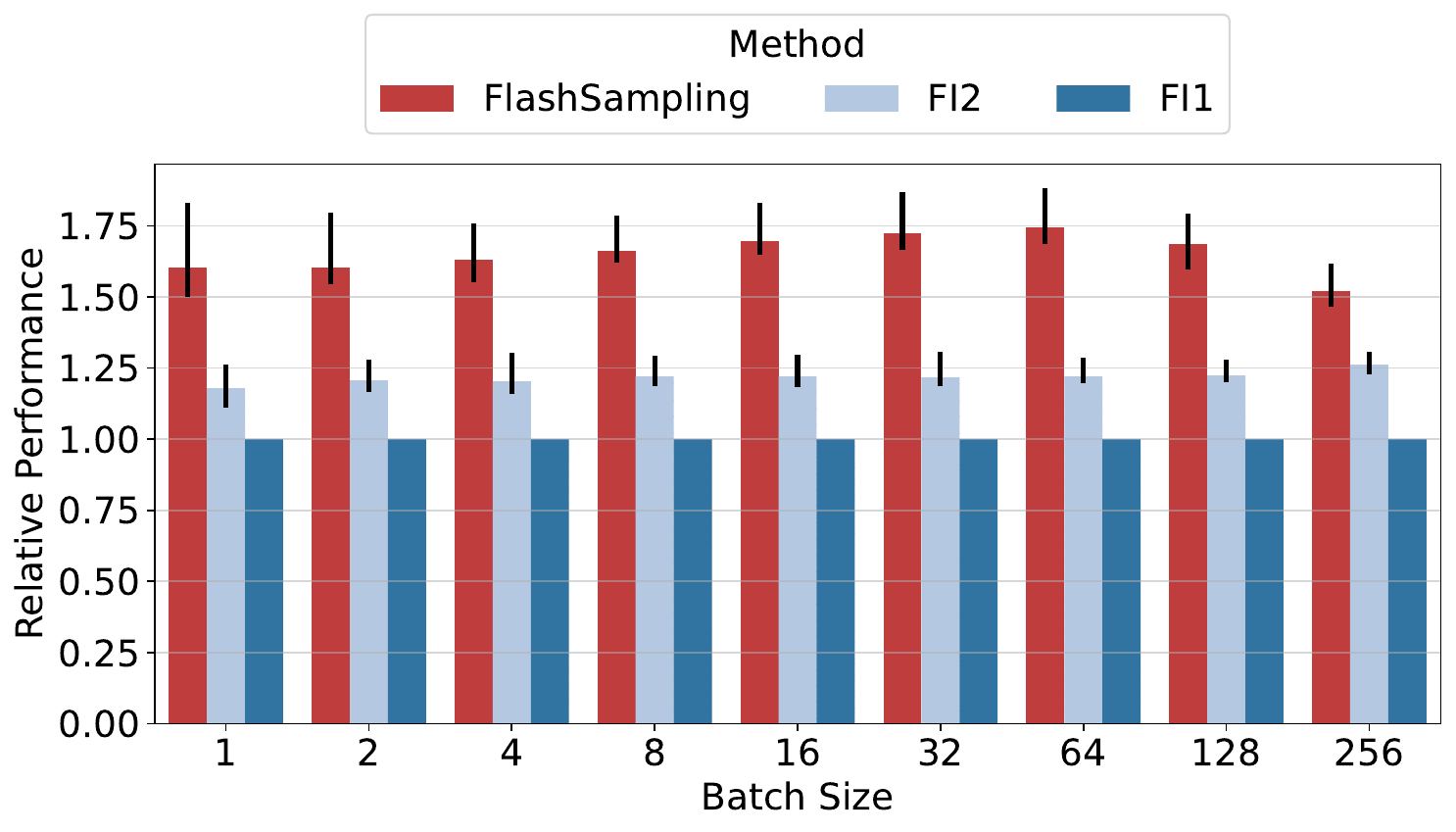}
\caption{Relative performance on B200. Left: Versus Multinomial Sampling. Right: Versus FlashInfer FI1 and FI2. The error bars indicate min-max relative speedups across 10 runs of 100 iterations each.}
\label{fig:relative_perf}
\end{figure}

We have the following observations:
\begin{enumerate}[leftmargin=*, itemsep=1pt, topsep=1pt, parsep=1pt]
  \item \textbf{FlashSampling is consistently faster.} For $B \le 64$, FlashSampling is faster than all baselines on all GPUs. The peak speedup vs.\ Multinomial Sampling is $2.23\times$ (B300) and the peak speedup vs.\ FI1 is $1.74\times$ (B200).
  \item \textbf{The gain is primarily from fusion.} Speedups over FI2 are smaller than speedups over Multinomial Sampling or FI1 because FI2 also uses Gumbel-Max sampling. The remaining gain therefore comes mainly from eliminating logits materialization and sampling overhead (Section~\ref{sec:bsz-trend}).
  \item \textbf{The advantage narrows at large batch sizes.} As batch size grows to $B = 256$, GEMM efficiency matters more and the workload becomes less dominated by memory-bound sampling. Appendix~\ref{app:large_config}, Table~\ref{tab:speedup_large} shows the same qualitative trend for the larger hidden dimension $D=8192$ with the crossover occurring earlier.
\end{enumerate}

\begin{figure}[!htbp]
\centering
\includegraphics[width=1.0\columnwidth]{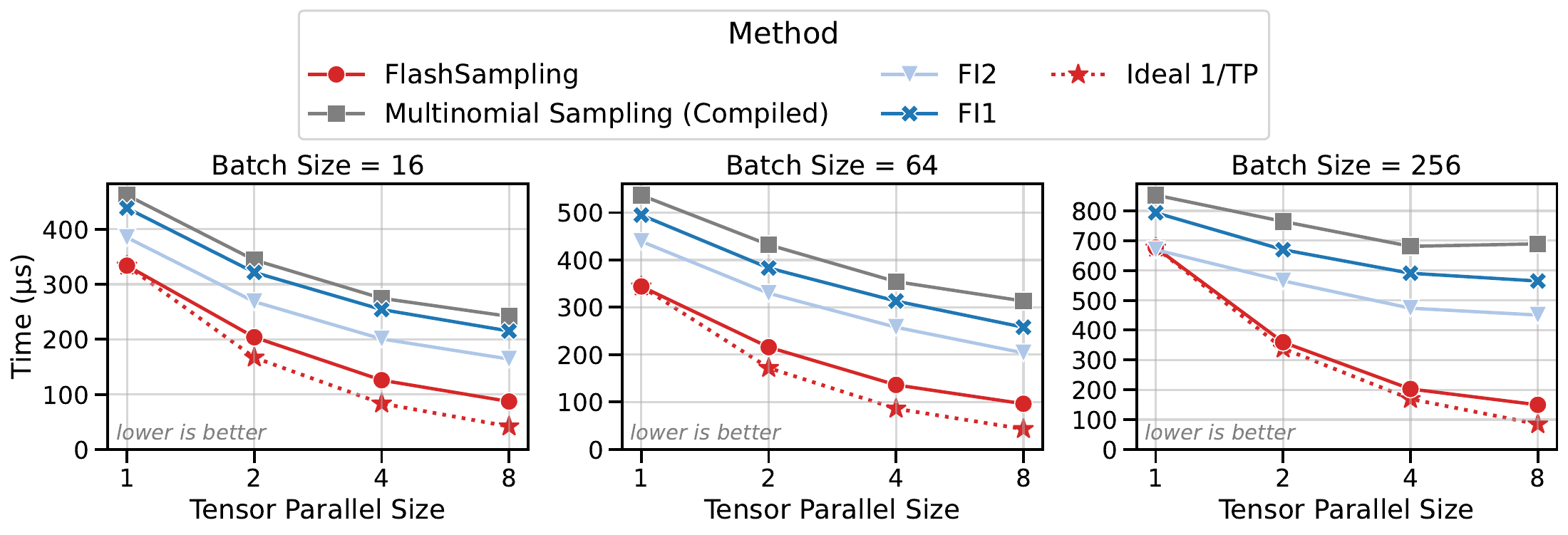}%
\caption{Runtime (lower is better) of FlashSampling and the three baselines at TP=1, 2, 4, 8.
The batch sizes (16, 64, 256) are selected to include compute- and memory-bound decode regimes.
The multi-GPU microbenchmarks took under 5 hours total to run.
The data is tabulated in Table~\ref{tab:tp_runtimes} (Appendix~\ref{app:tp_runtimes}).}
\label{fig:tp_scaling}
\end{figure}

\subsection{Multi-GPU Results with Tensor Parallelism}
\label{sec:tp_results}

\paragraph{Tensor-parallel fusion.}
When the vocabulary is sharded across ranks, each rank can run the fused kernel on its local shard and return only small summaries rather than all local logits. In the grouped formulation below, these summaries are a local sample and a local log-mass. No $O(V)$ all-gather of logits is required. In practice, this is implemented as per-tile P2P writes from inside the matmul epilogue (Section~\ref{sec:fused_mm_sample}), with a cross-rank barrier before the reduction stage.

We measure the lowest runtime across 5 runs of 100 iterations (the minimum is more robust to one-sided benchmarking noise~\citep{chen2016robust}, which we observed in multi-GPU benchmarks)\footnote{We also observed a large runtime difference between different hardware nodes on Modal cloud, even when GPUs were connected via NVLink.}.

Figure~\ref{fig:tp_scaling} compares the runtime (lower is better) of FlashSampling and the three baselines over tensor parallel sizes $TP \in \{1, 2, 4, 8\}$.
The dotted line shows the \textit{ideal speedup}, which is the runtime for TP=1 divided by the TP size.
We observe that FlashSampling is faster than all baselines in low and medium batch sizes (16 and 64, memory-bound regime).
At higher batch sizes (256, close to the compute-bound regime), FlashSampling closely follows the ideal speedup, scaling optimally across multiple GPUs.
The baselines (FI1, FI2, Multinomial) wait for cuBLAS to finish, then issue an all-gather collective. FlashSampling instead emits per-tile P2P writes, which run concurrently with the matmul, effectively hiding the GPU-to-GPU communication.

\begin{table}[!htbp]
\centering
\caption{Sampling as a percentage of total kernel time.
A high fraction spent on sampling rather than matmul is an indicator of inefficient sampling implementation.
FlashSampling's sampling fraction stays low because it is fused into the matmul epilogue; the baselines' fraction grows with batch size $B$.
Bold marks the highest sampling fraction for each method.}
\label{tab:sampling_pct}
\begin{tabular}{l rr rr rr}
\toprule
& \multicolumn{2}{c}{\emph{FlashSampling}} & \multicolumn{2}{c}{\emph{Multinomial Sampling}} & \multicolumn{2}{c}{\emph{FI2} (Gumbel-Max)} \\
\cmidrule(lr){2-3}\cmidrule(lr){4-5}\cmidrule(lr){6-7}
$B$ & matmul (\%) & sampl. (\%) & matmul (\%) & sampl. (\%) & matmul (\%) & sampl. (\%) \\
\midrule
1   & 97.7 &  2.3 & 93.7 &  6.3 & 94.7 &  5.3 \\
16  & 97.7 &  2.3 & 87.1 & 12.9 & 93.4 &  6.6 \\
64  & 93.6 &  6.0 & 71.3 & \textbf{28.7} & 88.6 & 11.4 \\
256 & 93.4 &  \textbf{6.2} & 73.1 & 26.9 & 88.2 & \textbf{11.8} \\
\bottomrule
\end{tabular}
\end{table}

\subsection{Interpreting the Batch-Size Trend}
\label{sec:bsz-trend}

The cost model in Section~\ref{sec:cost_model} showed that HBM savings from avoiding the logits write and reread alone are small (${\le}6\%$ of traffic).
Figure~\ref{fig:sampling_latency} reveals a larger effect: the baselines' separate sampling kernels are expensive, and their runtime grows steeply with batch size, while FlashSampling absorbs sampling into the matmul at negligible cost (Table~\ref{tab:sampling_pct}: 2--6\% of kernel time).
Eliminating these separate kernels is the primary source of speedup.
The advantage narrows at large batch sizes because FlashSampling's Triton matmul becomes less efficient than cuBLAS (right panel), partially offsetting the sampling savings. Note that Triton is platform-agnostic (AMD, Intel GPUs, etc.), so the cuBLAS gap is a trade-off for portability. Profiling was performed on an RTX~3090 using Nsight Compute and Proton.

\begin{figure}[htbp]
\centering
\includegraphics[width=0.48\columnwidth]{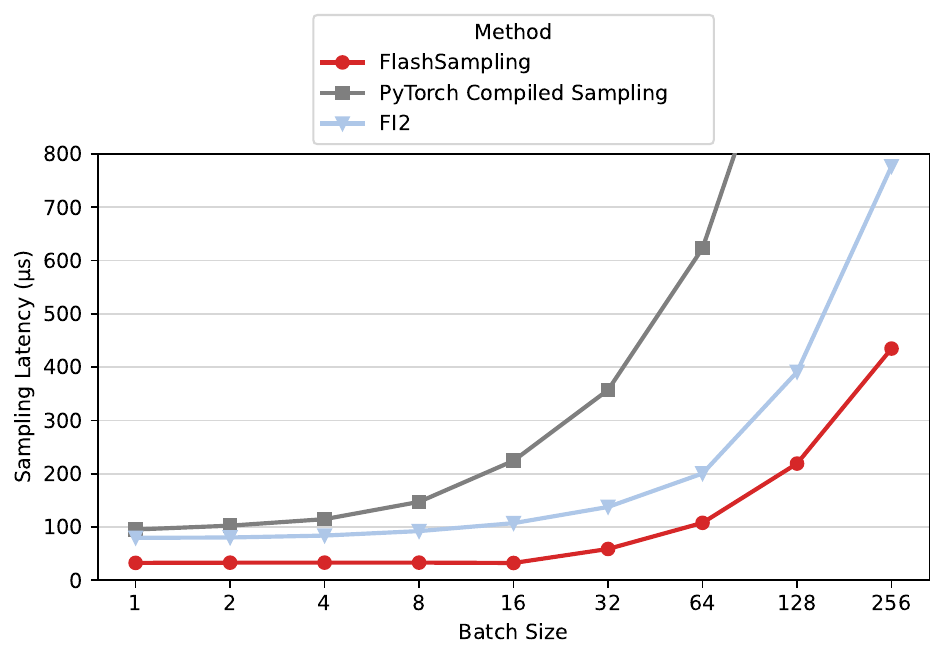}
\hfill
\includegraphics[width=0.48\columnwidth]{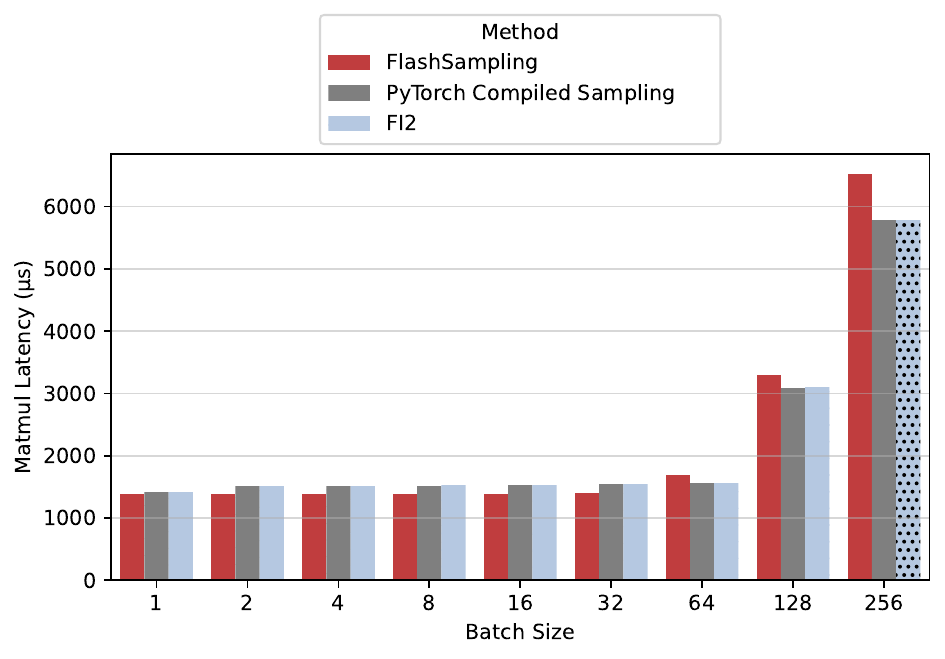}
\caption{Sampling runtime (left) and matmul runtime (right) in $\mu$s vs.\ batch size.
Lower is better.}
\label{fig:sampling_latency}
\end{figure}

\subsection{End-to-End vLLM Evaluation}
\label{sec:vllm}
In this section, we demonstrate the end-to-end speedups achieved by FlashSampling on LLM inference.
We integrate FlashSampling into vLLM~\citep{kwon2023efficient} by replacing the LM-head projection and the sampling step. 
We benchmark TPOT (Time Per Output Token) using problems from the AIME22-24 dataset\footnote{\url{https://huggingface.co/datasets/AI-MO/aimo-validation-aime}}.
vLLM uses continuous batching, so the effective batch size varies dynamically during serving.
We use \texttt{vllm bench sweep serve} with \texttt{--max-concurrency}=$B$ to implement the batch size, and set \texttt{--request-rate}=$B$ for requests to follow a Poisson process at $B$ requests per second.
We rerun the benchmark 5 times for each batch size, compare TPOT between baseline and FlashSampling, and report the median TPOT reduction across the 5 runs.
Experiments run on B200 GPUs across four models spanning a range of sizes: Qwen3-1.7B and Qwen3-8B on a single GPU (TP1), and Qwen3-32B and Llama-3.3-70B on two GPUs (TP2). Table~\ref{tab:vllm_tpot_abs},~\ref{tab:vllm_tpot}, together with Figure~\ref{fig:tpot}, present the experimental results.

\paragraph{Key observations.}
The speedups are proportional to the decoding time spent on the LM head compared to attention and FFN.
This explains the highest speedups on Qwen3-1.7B and 8B, which see up to $10.2\%$ and $8.7\%$ TPOT reduction, respectively.
For Qwen3-32B and Llama-3.3-70B, attention and FFN layers dominate decode time, so the speedups are smaller, but consistent (peaks of $2.9\%$ and $2.7\%$, respectively).

\begin{figure}[htbp]
\centering
\includegraphics[width=0.25\columnwidth]{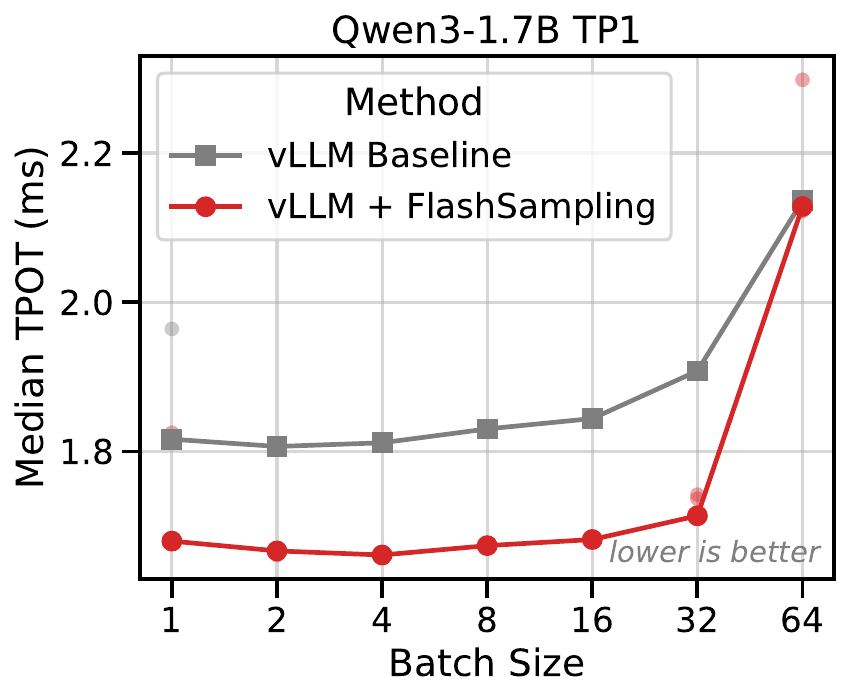}%
\includegraphics[width=0.25\columnwidth]{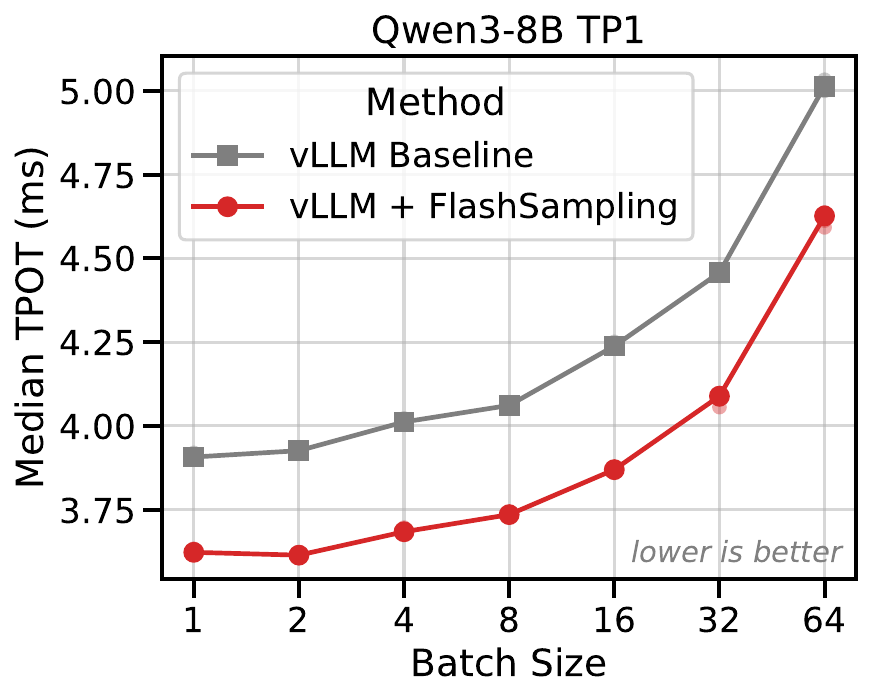}%
\includegraphics[width=0.25\columnwidth]{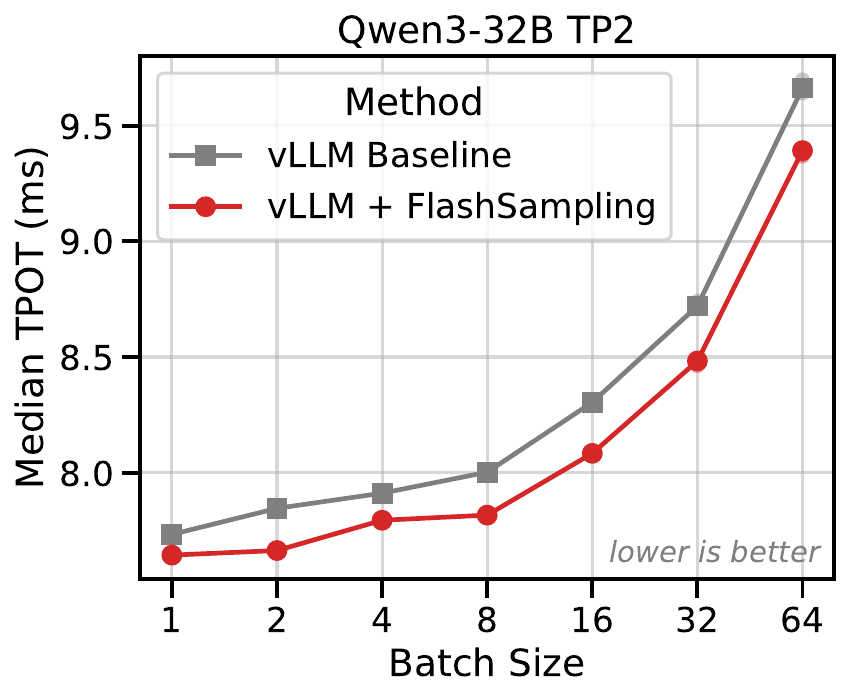}%
\includegraphics[width=0.25\columnwidth]{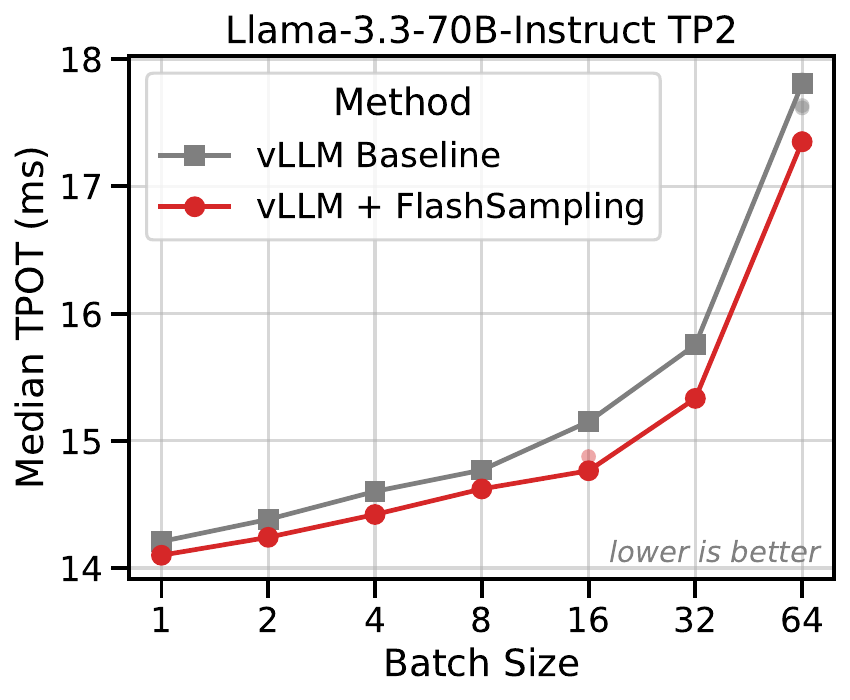}%
\caption{TPOT vs.\ concurrency on B200 across four model sizes. Qwen3-1.7B (TP1) and Qwen3-8B (TP1) see the largest reductions (up to $10\%$ and $7$--$9\%$, respectively). Qwen3-32B (TP2) and Llama-3.3-70B (TP2) achieve smaller but consistent gains (peaks of $2.9\%$ and $2.7\%$), since attention and FFN dominate decode time at these scales.}
\label{fig:tpot}
\end{figure}

\subsection{Empirical Correctness Verification}
\paragraph{Kernel Level.}
To verify sampling correctness, we compare samples from FlashSampling to the reference PyTorch implementation using a chi-squared goodness-of-fit test.
We use a vocabulary size $V=512$ and draw 10{,}000 samples, giving sufficiently large expected counts per category for the test, and find no statistically significant difference.

\paragraph{End-to-end Level.}
We run FlashSampling on 1,319 questions from the GSM8K dataset~\citep{cobbe2021training} using Qwen3-1.7B and check the answers with a LLM judge.
FlashSampling achieves $89.4\%$ accuracy versus $89.6\%$ for the baseline.
This difference is not statistically significant (p=0.776), according to a paired bootstrap test.
This is consistent with exact sampling.
One cannot use greedy sampling here, since it would disable FlashSampling.

\section{Related Work}

\paragraph{Gumbel-Max and Extensions.}
The Gumbel-Max trick for exact categorical sampling dates to \citet{gumbel1954statistical} and was formalized by \citet{maddison2014astar}.
\citet{jang2017gumbelsoftmax} introduced the Gumbel-Softmax relaxation for differentiable discrete sampling, which complements our focus on exact sampling.
\citet{huijben2022review} surveys the broader Gumbel-Max literature.
\citet{kool2019stochastic} extend the trick to top-$k$ sampling without replacement, and \citet{qi2020fast} study fast Gumbel variate generation.
\citet{ahmed2026entropyaligneddecodinglmsbetter} modify the sampling distribution via entropy-aware reweighting and use Gumbel-Max as a subroutine.
FlashSampling contributes a systems-oriented hierarchical decomposition for exact online and distributed sampling in LLM inference, preserving the original distribution exactly.

\paragraph{IO-Aware Kernel Fusion.}
FlashAttention \citep{dao2022flashattention} showed that avoiding materialization of the attention matrix can substantially reduce HBM traffic, with subsequent work improving parallelism \citep{dao2024flashattention} and exploiting hardware asynchrony \citep{shah2024flashattention}.
Cut Your Losses \citep{wijmans2025cut}, Liger Kernel~\citep{hsu2025ligerkernel}, and \citet{dong2025projectionpredictionlogitsscalable} apply the same idea to training-time cross-entropy by fusing the LM-head matmul with the loss computation.
The same matmul-plus-epilogue fusion pattern appears in MLP layers \citep{zhang2026deepkernelfusiontransformers}, RNNs \citep{poppel2025flashrnn}, and whole-model inference \citep{nrusimha2025flashformerwholemodelkernelsefficient}.
At the compiler level, EVT \citep{chen2024evt} auto-generates fused GEMM epilogues via CUTLASS, and \citet{samaga2025fasterapproxtopkharnessing} fuses approximate top-$k$ selection into the matmul on TPUs.
FlashSampling applies this methodology to a different domain: inference-time sampling, exploiting domain-specific structure (Gumbel-Max decomposability), and achieving exactness (no approximations).

\paragraph{Efficient LLM Sampling.}
FlashInfer \citep{ye2025flashinfer} provides optimized GPU kernels for attention and sampling in LLM serving, including sorting-free rejection sampling for top-$k$/top-$p$.
Qrita \citep{park2026qritahighperformancetopktopp} speeds up top-$k$/top-$p$ truncation, which is orthogonal to FlashSampling and theoretically composable with it.
Min-$p$ sampling \citep{minh2025turning} proposes a dynamic truncation method that, like top-$p$, requires probability computation before truncation.
SIMPLE \citep{zhao2025simpledisaggregatingsamplinggpu} offloads sampling to the CPU, motivated by the same bottleneck FlashSampling addresses.
Sampled softmax \citep{rawat2019sampled} reduces large-vocabulary cost by computing the loss over a random subset, trading exactness for speed.
All these methods operate on pre-materialized logits, while FlashSampling avoids materializing them entirely and introduces no approximation.

\section{Conclusion}

We presented \textbf{FlashSampling}, a simple fused design for exact categorical sampling that avoids materializing the $[B,V]$ logits tensor in HBM. The key ideas are straightforward: exact sampling does not require an explicit softmax, the fused tiled kernel is exact by $\argmax$ decomposition over vocabulary tiles, and grouped log-masses yield exact online and distributed variants. The method introduces no approximation: it produces exact samples from the target categorical distribution.
Empirically, FlashSampling is most effective in the memory-bound decode regime, where it absorbs the sampling kernel into the matmul. 
In multi-GPU settings, it overlaps cross-GPU communication with logit computation and HBM loads, scaling near-ideally to 8 GPUs.

\paragraph{Limitations.}
Our Triton matmul implementation becomes less efficient than cuBLAS at large batch sizes, as discussed in Section~\ref{sec:bsz-trend}.
However, this choice makes the kernel portable to other platforms (e.g., AMD GPUs), unlike a CUTLASS implementation.
The top-$k$ extension is proven correct (Appendix~\ref{sec:theory}) but not yet implemented.
Support for lower-precision inputs (FP8, MXFP4) is not yet implemented, but the kernel structure admits them without algorithmic changes.

\section*{Acknowledgements}
We sincerely thank Yongye Zhu, Zhuoqing Song, and Mayank Mishra for their helpful discussions and constructive feedback. We used large language models to assist in polishing the writing of this work.

\vspace{5ex}
\bibliographystyle{plainnat}
\bibliography{reference}


\clearpage
\appendix

\renewcommand{\appendixpagename}{\centering \huge Appendix}
\appendixpage
\counterwithin{theorem}{section}
\counterwithin{algorithm}{section}

\startcontents[section]
\printcontents[section]{l}{1}{\setcounter{tocdepth}{2}}
\clearpage

\section{Baseline Algorithm}

\begin{algorithm}[ht]
\caption{One common materialized-logits sampling pipeline}
\label{alg:standard_sampling}
\begin{algorithmic}[1]
\Require Hidden state $\vct{h}\in\R^D$, LM-head weights $\mat{W}\in\R^{V\times D}$, optional deterministic transforms
\Ensure Sampled index $i^\star\in\{1,\dots,V\}$
\State $\vct{\ell} \gets \mat{W}\vct{h}$ \Comment{GEMM: compute logits and write to HBM}
\State $\tilde{\vct{\ell}} \gets \mathrm{transform}(\vct{\ell})$ \Comment{temperature, bias, mask; read/write HBM}
\State $m \gets \max_i \tilde{\ell}_i$ \Comment{pass 1 over transformed logits}
\State $Z \gets \sum_{i=1}^V \exp(\tilde{\ell}_i - m)$ \Comment{pass 2 over transformed logits}
\State $p_i \gets \exp(\tilde{\ell}_i - m)/Z$ for all $i$ \Comment{write probabilities}
\State $c_i \gets \sum_{j=1}^i p_j$ for all $i$ \Comment{prefix sum}
\State Draw $u\sim \Unif(0,1)$
\State $i^\star \gets \min\{i : c_i \ge u\}$ \Comment{search}
\State \Return $i^\star$
\end{algorithmic}
\end{algorithm}

\section{GPU Configuration}
\subsection{GPU Memory Hierarchy}
\label{sec:mem_hierarchy}
Table~\ref{tab:mem_hierarchy} summarizes the GPU memory hierarchy. On-chip memory (registers, SRAM) is orders of magnitude faster than HBM but far smaller. FlashSampling exploits this gap by keeping logits in registers/SRAM and never writing the full logits tensor to HBM.

\begin{table}[ht]
\centering
\caption{GPU memory hierarchy (H100 SXM)~\citep{nvidia_h100_whitepaper,nvidia_h100_datasheet}.}
\label{tab:mem_hierarchy}
\begin{tabular}{lcc}
\toprule
\textbf{Level} & \textbf{Capacity} & \textbf{Bandwidth} \\
\midrule
Registers/SRAM & 256\,KB / SM & $>$\,100\,TB/s \\
L2 cache       & 50\,MB        & ${\sim}$\,12\,TB/s \\
HBM3           & 80\,GB        & 3.35\,TB/s \\
\bottomrule
\end{tabular}
\end{table}

\begin{table}[htbp]
\centering
\caption{GPU specifications. Peak BF16 TFLOP/s are dense (without structured sparsity), since the LM-head matmul is a dense GEMM. The ops:byte ratio (peak compute / bandwidth) contextualizes the crossover between bandwidth- and compute-limited regimes, although the exact crossover is kernel-dependent.}
\label{tab:gpu_specs}
\begin{tabular}{lcccc}
\toprule
& \textbf{H100} & \textbf{H200} & \textbf{B200} & \textbf{B300} \\
\midrule
Architecture & Hopper & Hopper & Blackwell & Blackwell \\
HBM capacity (GB) & 80 & 141 & 192 & 288 \\
HBM bandwidth (TB/s) & 3.35 & 4.8 & 8.0 & 8.0 \\
Peak BF16 dense (TFLOP/s) & 989 & 989 & 2,250 & 2,250 \\
Ops:byte ratio & 295 & 206 & 281 & 281 \\
\bottomrule
\end{tabular}
\end{table}

\section{Further details of FlashSampling}
\label{sec:details of flashsampling}

\paragraph{RNG determinism.}
For reproducibility, RNG streams are indexed by the logical output position $(b,i)$ using a counter-based RNG (e.g.\ Philox), so each random number is a deterministic function of a key and a counter.
Uniform variates are mapped to the open interval $(0,1)$ to avoid infinities in the Gumbel transform $g=-\log(-\log u)$.

\paragraph{Numerical precision.}
GEMM accumulation and perturbed scores are computed in FP32 for stability, even when inputs are FP16 or BF16.
Gumbel noise is likewise generated in FP32 to avoid numerical error in the logarithms.
The overhead is minor compared with the GEMM itself.

\section{Theoretical Analysis of FlashSampling}
\label{sec:theory}

This section separates the two exactness arguments used in the paper. The fused tiled kernel is exact \emph{pathwise}: once perturbed scores are formed, the global maximizer is exactly the maximizer of the tile-local maxima. Grouped, online, and distributed variants are exact \emph{in distribution}: they rely on hierarchical factorization through group log-masses.

\subsection{Group-Gumbel-Max: Hierarchical Exact Sampling}
\label{sec:ggm}

Partition $[V]$ into $m$ disjoint groups $\{\mathcal{G}_k\}_{k=0}^{m-1}$; the groups need not have equal size. For any group with at least one finite transformed logit, define
\[
L_k \;=\; \log \sum_{i\in\mathcal{G}_k} \exp(\tilde{\ell}_i)
\;=\;
\lse(\tilde{\vct{\ell}}_{\mathcal{G}_k}).
\]
If a group contains no finite transformed logit, then $L_k=-\infty$, the group has zero probability mass, and it can be skipped.

After discarding zero-mass groups, the categorical distribution factorizes as
\[
\underbrace{\mathbb{P}(K=k)}_{\text{choose group}} \propto \exp(L_k),
\qquad
\underbrace{\mathbb{P}(I=i \mid K=k)}_{\text{choose within group}} \propto \exp(\tilde{\ell}_i)
\quad \text{for } i\in\mathcal{G}_k.
\]
Thus exact sampling from the full categorical can be implemented by first choosing a group using the logits $\{L_k\}$ and then sampling within the chosen group.

\paragraph{Parallel FlashSampling.}
Suppose logits arise from a linear projection $\vct{y}=\mat{W}\vct{x}$, where $\mat{W}\in\R^{V\times D}$ and $\vct{x}\in\R^D$. Let $\mat{W}_{\mathcal{G}_k}\in\R^{|\mathcal{G}_k|\times D}$ be the block of rows indexed by group $\mathcal{G}_k$, so $\vct{y}_k=\mat{W}_{\mathcal{G}_k}\vct{x}\in\R^{|\mathcal{G}_k|}$ are the group logits. Parallel FlashSampling computes groups independently: each group with nonzero mass computes (i) an exact local sample $z_k\sim\cat(\softmax(\vct{y}_k))$ and (ii) its group log-mass $L_k=\lse(\vct{y}_k)$. The algorithm then samples $K\sim\cat(\softmax(\vct{L}))$ and returns $z_K$ mapped to its global index. This is exact by direct factorization.

\paragraph{Online FlashSampling.}
When memory is the primary constraint, FlashSampling can stream groups one at a time and maintain only a running log-mass and a running sample. Suppose the current running state is $(L_{\mathrm{run}},z)$ and the next nonzero-mass group has log-mass $L_k$ and exact local sample $z_k$. Define
\[
L_{\mathrm{new}}=\log\big(e^{L_{\mathrm{run}}}+e^{L_k}\big).
\]
Then replace $z$ by $z_k$ with probability
\[
\frac{e^{L_k}}{e^{L_{\mathrm{run}}}+e^{L_k}}
= e^{L_k-L_{\mathrm{new}}}
= \frac{1}{1+e^{L_{\mathrm{run}}-L_k}},
\]
and otherwise keep $z$. Section~\ref{sec:theory_ggm} proves that this binary merge rule preserves exactness by induction.

\subsection{Distributed FlashSampling for Tensor-Parallel Vocabularies}
\label{sec:distributed}

In tensor-parallel LM heads, the vocabulary dimension is sharded across $n$ GPUs. Naively, each GPU computes local logits and then an all-gather concatenates the full $V$ logits before sampling, incurring communication proportional to the vocabulary size per row. FlashSampling treats shards as groups: each rank returns (i) a local exact sample from its shard, if its shard has nonzero mass for that row, and (ii) the shard log-mass $L_k$. A final exact categorical sample over the shard log-masses chooses which rank provides the global sample. Communication therefore scales with the number of shards, not the number of vocabulary entries.
Mechanically, each rank fan-outs its per-tile candidates to peers via NVLink P2P from within the matmul epilogue, so this communication overlaps with the GEMM rather than executing as a separate post-GEMM collective.

\subsection{A Unifying View: Max-Stability of Grouped Gumbel Perturbations}
\label{sec:max_stability}

Group-Gumbel-Max and FlashSampling both rely on the same structural fact: \emph{max} decomposes over partitions. For grouped variants we additionally use the max-stability of Gumbel perturbations.

\begin{lemma}[Gumbel max-stability under grouping]
\label{lem:max_stability}
Let $\{g_i\}_{i=1}^V$ be i.i.d.\ $\gumbel(0,1)$ and let $\{\mathcal{G}_k\}_{k=0}^{m-1}$ be a partition of $[V]$. Assume each group under discussion contains at least one finite transformed logit. Define
\[
M_k \;=\; \max_{i\in\mathcal{G}_k}(\tilde{\ell}_i + g_i),
\qquad
I_k \;=\; \argmax_{i\in\mathcal{G}_k}(\tilde{\ell}_i + g_i),
\qquad
L_k \;=\; \log\sum_{i\in\mathcal{G}_k} e^{\tilde{\ell}_i}.
\]
Then:
\begin{enumerate}
  \item $M_k \sim \gumbel(L_k,1)$,
  \item $\{M_k\}$ are independent across disjoint groups,
  \item $\mathbb{P}(I_k=i)=e^{\tilde{\ell}_i}/\sum_{j\in\mathcal{G}_k}e^{\tilde{\ell}_j}$ for $i\in\mathcal{G}_k$.
\end{enumerate}
\end{lemma}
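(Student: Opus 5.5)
The plan is to work directly with Gumbel CDFs. Recall that $G\sim\gumbel(\mu,1)$ has $\mathbb{P}(G\le x)=\exp(-e^{-(x-\mu)})$, so $\tilde{\ell}_i+g_i\sim\gumbel(\tilde{\ell}_i,1)$ for finite $\tilde{\ell}_i$. Indices with $\tilde{\ell}_i=-\infty$ give perturbed score $-\infty$ almost surely. They contribute $e^{\tilde{\ell}_i}=0$ to $L_k$ and never attain the argmax, because each group under discussion has a finite entry. So first I discard them and assume every logit in $\mathcal{G}_k$ is finite. Ties occur with probability zero since the Gumbel law is continuous, so $I_k$ is almost surely well defined.

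\textbf{Item 1.} By independence within the group,
\[
\mathbb{P}(M_k\le x)=\prod_{i\in\mathcal{G}_k}\exp\!\big(-e^{-(x-\tilde{\ell}_i)}\big)=\exp\!\Big(-e^{-x}\sum_{i\in\mathcal{G}_k}e^{\tilde{\ell}_i}\Big)=\exp\!\big(-e^{-(x-L_k)}\big),
\]
which is exactly the $\gumbel(L_k,1)$ CDF.

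\textbf{Item 2.} Each $M_k$ is a measurable function of $\{g_i\}_{i\in\mathcal{G}_k}$ alone. Disjoint groups use disjoint subfamilies of an independent family, so the $M_k$ are independent by the standard grouping lemma for independent random variables. The same argument shows the pairs $(M_k,I_k)$ are independent across groups; I record this in passing because the grouped variants will need it.

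\textbf{Item 3.} This is the Gumbel-Max theorem stated earlier, applied to the subvector $\tilde{\vct{\ell}}_{\mathcal{G}_k}$. For a self-contained check, fix $i\in\mathcal{G}_k$, write $s_j=\tilde{\ell}_j+g_j$, and condition on $s_i=x$:
\[
\mathbb{P}(I_k=i)=\int f_i(x)\prod_{j\neq i}F_j(x)\,dx,\qquad f_i(x)=e^{-(x-\tilde{\ell}_i)}\exp\!\big(-e^{-(x-\tilde{\ell}_i)}\big),
\]
where $F_j$ is the $\gumbel(\tilde{\ell}_j,1)$ CDF. The integrand simplifies to $e^{\tilde{\ell}_i}e^{-x}\exp(-e^{-x}e^{L_k})$. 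Substituting $t=e^{-x}$ turns the integral into $e^{\tilde{\ell}_i}\int_0^\infty e^{-te^{L_k}}dt=e^{\tilde{\ell}_i-L_k}$, which is the claimed probability.

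None of these steps should be a real obstacle. The only care needed is in handling the $-\infty$ entries and ties almost surely, and in choosing the right substitution in item 3.
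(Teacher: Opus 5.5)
Your proposal is correct and follows the paper's proof essentially step for step: a product of Gumbel CDFs for item 1, independence of functions of disjoint subfamilies for item 2, and the Gumbel-Max theorem applied to the restricted logits for item 3. The paper leaves implicit three things you make explicit: the integral verification of item 3, the treatment of $-\infty$ entries, and the almost-sure absence of ties. These add care but do not change the argument.
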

\begin{proof}
For any real $t$,
\[
\mathbb{P}(M_k\le t)
=
\prod_{i\in\mathcal{G}_k}\mathbb{P}(g_i\le t-\tilde{\ell}_i)
=
\prod_{i\in\mathcal{G}_k}\exp\big(-e^{-(t-\tilde{\ell}_i)}\big)
=
\exp\Big(-e^{-(t-L_k)}\Big),
\]
which is the CDF of $\gumbel(L_k,1)$. Independence follows because the groups are disjoint and the underlying Gumbels are independent. The within-group argmax probabilities are exactly the Gumbel-Max trick applied to the restricted transformed logits.
\end{proof}

\paragraph{Consequence.}
For grouped variants, selecting a group by $\argmax_k M_k$ is equivalent in distribution to applying Gumbel-Max directly to the group logits $\{L_k\}$. The outer group sample may therefore use fresh independent Gumbels, or it may reuse explicitly computed group maxima. For the fused two-stage kernel in Algorithm~\ref{alg:fused}, exactness does \emph{not} rely on max-stability: once the perturbed scores $x_i=\tilde{\ell}_i+g_i$ have been formed, exactness is simply the deterministic identity
\[
\max_i x_i = \max_t \max_{i\in\mathcal{V}_t} x_i.
\]

\subsection{Exactness of Group-Gumbel-Max}
\label{sec:theory_ggm}

The correctness of grouped FlashSampling rests on two facts: exact group factorization, and the binary merge rule used by the online variant.

\begin{lemma}[Exact group factorization]
\label{lem:group_factorization}
Let $[V]$ be partitioned into groups $\{\mathcal{G}_k\}_{k=0}^{m-1}$, and discard any zero-mass groups. Define $L_k=\log\sum_{i\in\mathcal{G}_k}\exp(\tilde{\ell}_i)$. If we sample $K\sim \cat(\softmax(\vct{L}))$ and then sample $I\mid(K=k)\sim \cat(\softmax(\tilde{\vct{\ell}}_{\mathcal{G}_k}))$, the marginal distribution of $I$ equals $\cat(\softmax(\tilde{\vct{\ell}}))$.
\end{lemma}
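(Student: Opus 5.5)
The proof is a direct application of the law of total probability, computing the marginal of $I$ one index at a time. Let $Z=\sum_{j=1}^V e^{\tilde{\ell}_j}$, which is positive and finite because the row is assumed to have at least one finite entry. Let $Z_k=e^{L_k}=\sum_{i\in\mathcal{G}_k}e^{\tilde{\ell}_i}$. Discarding the zero-mass groups (those with $Z_k=0$) removes only indices $i$ with $e^{\tilde{\ell}_i}=0$, that is, $\tilde{\ell}_i=-\infty$. Both the target distribution and the two-stage procedure assign these indices probability zero, so they can be set aside. The key observation is that the surviving groups still have total mass $\sum_k Z_k = Z$, since only zero terms were dropped.

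\textbf{Step 1: the outer probabilities.} The group distribution is
\[
\mathbb{P}(K=k)=\frac{e^{L_k}}{\sum_{k'}e^{L_{k'}}}=\frac{Z_k}{Z}.
\]

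\textbf{Step 2: the inner probabilities.} For $i\in\mathcal{G}_k$ with $Z_k>0$, the within-group distribution is
\[
\mathbb{P}(I=i\mid K=k)=\frac{e^{\tilde{\ell}_i}}{Z_k},
\]
and $\mathbb{P}(I=i\mid K=k)=0$ for $i\notin\mathcal{G}_k$, because the inner sample is supported on $\mathcal{G}_k$.

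\textbf{Step 3: combine.} Fix any $i$ with finite $\tilde{\ell}_i$. Since the groups partition $[V]$, there is exactly one $k(i)$ with $i\in\mathcal{G}_{k(i)}$, and that group has positive mass. In the sum $\mathbb{P}(I=i)=\sum_k \mathbb{P}(K=k)\,\mathbb{P}(I=i\mid K=k)$ only the term $k=k(i)$ survives, so
\[
\mathbb{P}(I=i)=\frac{Z_{k(i)}}{Z}\cdot\frac{e^{\tilde{\ell}_i}}{Z_{k(i)}}=\frac{e^{\tilde{\ell}_i}}{Z}=\softmax(\tilde{\vct{\ell}})_i .
\]
Indices with $\tilde{\ell}_i=-\infty$ get probability zero under both distributions, which completes the proof.

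There is no substantive obstacle here. The one point needing care is the bookkeeping around $-\infty$ logits and discarded groups: one must check that discarding zero-mass groups leaves the normalizer $\sum_k e^{L_k}$ equal to the full $Z$, and that the inner conditional is well defined because $Z_k>0$. Lemma~\ref{lem:max_stability}(3) can additionally be cited to justify that the within-group Gumbel-Max produces exactly the inner conditional used in Step 2.
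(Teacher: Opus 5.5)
Your proof is correct and follows the paper's argument. Both condition on the unique group containing $i$ and multiply $e^{L_k}/\sum_s e^{L_s}$ by $e^{\tilde{\ell}_i}/\sum_{j\in\mathcal{G}_k}e^{\tilde{\ell}_j}$, so the group mass cancels; your extra bookkeeping for $-\infty$ logits and discarded groups (checking that $\sum_k e^{L_k}$ still equals the full normalizer) makes explicit a point the paper leaves implicit.
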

\begin{proof}
For any $i\in\mathcal{G}_k$,
\[
\mathbb{P}(I=i)
=
\mathbb{P}(K=k)\,\mathbb{P}(I=i\mid K=k)
=
\frac{e^{L_k}}{\sum_{s}e^{L_s}}
\cdot
\frac{e^{\tilde{\ell}_i}}{\sum_{j\in\mathcal{G}_k}e^{\tilde{\ell}_j}}
=
\frac{e^{\tilde{\ell}_i}}{\sum_{j=1}^V e^{\tilde{\ell}_j}}.
\]
\end{proof}

\begin{lemma}[Binary merge rule]
\label{lem:binary_merge}
Let $A,B\subseteq [V]$ be disjoint and suppose both have nonzero mass. Define
\[
L_A=\log\sum_{i\in A} e^{\tilde{\ell}_i},
\qquad
L_B=\log\sum_{i\in B} e^{\tilde{\ell}_i}.
\]
Suppose $Z_A\sim\cat(\softmax(\tilde{\vct{\ell}}_A))$, $Z_B\sim\cat(\softmax(\tilde{\vct{\ell}}_B))$, and an independent Bernoulli choice selects $B$ with probability $e^{L_B}/(e^{L_A}+e^{L_B})$. Returning $Z_B$ when $B$ is selected and $Z_A$ otherwise yields an exact sample from $\cat(\softmax(\tilde{\vct{\ell}}_{A\cup B}))$.
\end{lemma}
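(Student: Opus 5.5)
The plan is to reduce Lemma~\ref{lem:binary_merge} to Lemma~\ref{lem:group_factorization}, applied to the two-group partition $\{A,B\}$ of the ground set $A\cup B$. Concretely, I would introduce a selector variable $K\in\{A,B\}$ equal to $B$ when the Bernoulli choice selects $B$ and equal to $A$ otherwise. I would then define the output as $I = Z_K$. The proof consists of checking that the pair $(K,I)$ has exactly the two-stage law that Lemma~\ref{lem:group_factorization} requires.

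\textbf{Outer stage.} By construction,
\[
\mathbb{P}(K=B)=e^{L_B}/(e^{L_A}+e^{L_B}),
\]
and the complementary probability is $e^{L_A}/(e^{L_A}+e^{L_B})$. So $K\sim\cat(\softmax(L_A,L_B))$. Both log-masses are finite, since $A$ and $B$ have nonzero mass, so no group needs to be discarded.

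\textbf{Inner stage.} I would use the independence of the Bernoulli choice from $(Z_A,Z_B)$ to write
\[
\mathbb{P}(I=i\mid K=B)=\mathbb{P}(Z_B=i\mid K=B)=\mathbb{P}(Z_B=i),
\]
which is $\softmax(\tilde{\vct{\ell}}_B)_i$; the case $K=A$ is symmetric. Lemma~\ref{lem:group_factorization}, applied with $V$ replaced by $A\cup B$, then gives that $I$ has law $\cat(\softmax(\tilde{\vct{\ell}}_{A\cup B}))$. As a sanity check, I would also verify this directly: for $i\in B$,
\[
\mathbb{P}(I=i)=\frac{e^{L_B}}{e^{L_A}+e^{L_B}}\cdot\frac{e^{\tilde{\ell}_i}}{e^{L_B}}=\frac{e^{\tilde{\ell}_i}}{\sum_{j\in A\cup B}e^{\tilde{\ell}_j}},
\]
using $e^{L_A}+e^{L_B}=\sum_{j\in A\cup B}e^{\tilde{\ell}_j}$ by disjointness.

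\textbf{Main obstacle.} The only delicate point is the independence assumption. The statement says the Bernoulli choice is ``independent'', and I will read this as independent of the pair $(Z_A,Z_B)$. Independence of $Z_A$ from $Z_B$ is not needed, because only one of them is ever returned. Without independence from the selector, the conditioning step fails, so I would state this reading explicitly.

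Finally, I would remark that iterating the lemma by induction over streamed groups justifies Online FlashSampling. The inductive state is that $z$ is an exact sample from the union $U$ of the groups seen so far, and that $L_{\mathrm{run}}=\lse(\tilde{\vct{\ell}}_U)$. Each merge uses fresh randomness, so the independence hypothesis holds at every step.
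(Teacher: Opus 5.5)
Your proposal is correct and is essentially the paper's proof. The paper simply carries out the direct two-case computation you give as your sanity check, and routing it through Lemma~\ref{lem:group_factorization} with the two-group partition $\{A,B\}$ repackages that same calculation. Your explicit remark that the Bernoulli choice must be independent of $Z_A$ and $Z_B$ is a useful clarification: the paper uses it implicitly in the step $\mathbb{P}(Z=i)=\mathbb{P}(\text{choose }A)\,\mathbb{P}(Z_A=i)$.
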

\begin{proof}
For any $i\in A$,
\[
\mathbb{P}(Z=i)
=
\mathbb{P}(\text{choose }A)\,\mathbb{P}(Z_A=i)
=
\frac{e^{L_A}}{e^{L_A}+e^{L_B}}
\cdot
\frac{e^{\tilde{\ell}_i}}{\sum_{j\in A} e^{\tilde{\ell}_j}}
=
\frac{e^{\tilde{\ell}_i}}{\sum_{j\in A\cup B} e^{\tilde{\ell}_j}}.
\]
The same calculation for $i\in B$ gives
\[
\mathbb{P}(Z=i)
=
\frac{e^{L_B}}{e^{L_A}+e^{L_B}}
\cdot
\frac{e^{\tilde{\ell}_i}}{\sum_{j\in B} e^{\tilde{\ell}_j}}
=
\frac{e^{\tilde{\ell}_i}}{\sum_{j\in A\cup B} e^{\tilde{\ell}_j}}.
\]
Hence $Z\sim\cat(\softmax(\tilde{\vct{\ell}}_{A\cup B}))$.
\end{proof}

\begin{theorem}[Exactness of hierarchical FlashSampling]
\label{thm:ggm_exact}
Algorithms~\ref{alg:flash_parallel}, \ref{alg:flash_seq}, and \ref{alg:flash_dist} return an exact sample from $\cat(\softmax(\tilde{\vct{\ell}}))$.
\end{theorem}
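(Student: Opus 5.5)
The plan is to treat the three algorithms (parallel, sequential/online, distributed) as three ways of composing the same two primitives, Lemma~\ref{lem:group_factorization} and Lemma~\ref{lem:binary_merge}, with Lemma~\ref{lem:max_stability} covering the case where the outer group choice reuses group maxima instead of fresh noise. Throughout, I condition on which groups have nonzero mass and discard the rest. This changes nothing, since a zero-mass group contributes no probability to the target and is never selected: its $L_k=-\infty$ gives selection probability $0$, and its maxima $M_k=-\infty$ never win an $\argmax$.

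\emph{Parallel variant} (Algorithm~\ref{alg:flash_parallel}). Each group produces an exact local sample $z_k\sim\cat(\softmax(\vct{y}_k))$, with the groups' randomness independent, and the outer draw is $K\sim\cat(\softmax(\vct{L}))$. If the outer draw uses fresh randomness independent of the $z_k$, then the conditional law of $z_K$ given $K=k$ is the law of $z_k$. Lemma~\ref{lem:group_factorization} then gives the claim directly.

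If instead $K=\argmax_k M_k$ and $z_k=I_k$ reuse the same Gumbels, independence between $K$ and $z_K$ is no longer automatic, and this is the step I expect to need care. I would handle it with the global Gumbel-Max theorem. Here $I_K=\argmax_i(\tilde\ell_i+g_i)$ holds pathwise by the identity $\max_i x_i=\max_k\max_{i\in\mathcal G_k}x_i$, so the output is exactly a full Gumbel-Max sample. This sidesteps any conditional-independence argument. Lemma~\ref{lem:max_stability} is then only needed to confirm that the marginal of $K$ is the softmax of $\vct L$.

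\emph{Online variant} (Algorithm~\ref{alg:flash_seq}). I argue by induction on the number $r$ of nonzero-mass groups processed, with invariant $(L_{\mathrm{run}},z)$: $L_{\mathrm{run}}=\lse$ over $U_r=\mathcal G_{k_1}\cup\dots\cup\mathcal G_{k_r}$, and $z\sim\cat(\softmax(\tilde{\vct\ell}_{U_r}))$.
\begin{itemize}
\item The base case is the initialization with the first nonzero-mass group, which holds because its local sample is exact.
\item For the step, take $A=U_r$ and $B=\mathcal G_{k_{r+1}}$. These are disjoint, and the fresh local sample and Bernoulli coin are independent of the past. Lemma~\ref{lem:binary_merge} gives the invariant for $U_{r+1}$, and $L_{\mathrm{new}}$ is the correct log-sum-exp by definition.
\end{itemize}
After all groups, $U$ is the full support, which proves the claim. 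I would also note that a sentinel initialization $L_{\mathrm{run}}=-\infty$ makes the first replacement probability equal to $1$, so the base case is covered by the same rule.

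\emph{Distributed variant} (Algorithm~\ref{alg:flash_dist}). Each shard is a group, and the rank-local computation is itself an instance of the fused kernel or the online variant on that shard. It is therefore an exact local sample by the previous cases, or by the pathwise $\argmax$ decomposition over tiles. Communication of $(z_k,L_k)$ is deterministic and does not alter distributions. The final selection over shard log-masses is exactly the parallel variant with groups equal to shards. The result then follows from the parallel case, using the same care about fresh versus reused noise.
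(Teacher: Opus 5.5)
Your proposal is correct and follows the paper's proof. The paper also handles the parallel and distributed variants by Lemma~\ref{lem:group_factorization}, treating shards as groups, and the online variant by induction over the streamed nonzero-mass groups using Lemma~\ref{lem:binary_merge}. Your additional points are refinements of that same argument, which the paper leaves implicit: the independence of the fresh outer noise, the pathwise treatment of the case where group maxima are reused, and the $-\infty$ sentinel that covers a zero-mass first group.
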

\begin{proof}
For the parallel and distributed variants, Lemma~\ref{lem:group_factorization} shows that it suffices to sample the group or shard index from logits $\{L_k\}$ and then sample within the chosen group; both steps are exact.

For the online variant, initialize with an exact sample from the first nonzero-mass group. Each subsequent update merges the current union with the next nonzero-mass group using Lemma~\ref{lem:binary_merge}. An induction over the streamed groups therefore yields an exact sample from the full categorical distribution.
\end{proof}

\subsection{Exactness of Tile-Wise FlashSampling Reduction}

FlashSampling also relies on a simpler structural lemma: the global maximum equals the maximum of the tile-local maxima.

\begin{lemma}[Max over vocabulary tiles]
\label{lem:max_tiles}
Let $\{x_i\}_{i=1}^V$ be real numbers and let $\{\mathcal{V}_s\}_{s=0}^{n_{\mathrm{tile}}-1}$ be a partition of $[V]$ into vocabulary tiles. For each tile, define
\[
m_s=\max_{i\in\mathcal{V}_s}x_i,
\qquad
\hat{\imath}_s \in \argmax_{i\in\mathcal{V}_s}x_i,
\]
where $\hat{\imath}_s$ is a global index in $\mathcal{V}_s$. Then
\[
\max_{i\in[V]} x_i = \max_s m_s.
\]
Moreover, for any $s^\star\in\argmax_s m_s$, the chosen index $\hat{\imath}_{s^\star}$ is a global maximizer. Conversely, every global maximizer lies in some tile $s^\star\in\argmax_s m_s$.
\end{lemma}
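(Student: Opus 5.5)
The plan is a direct two-inequality argument. The only facts needed are that the tiles $\{\mathcal{V}_s\}$ are disjoint and cover $[V]$, and that everything is finite. Because $[V]$ is finite and every tile is nonempty (empty tiles can be discarded, since they contribute no candidate), every maximum below is attained. In particular each $m_s$ is a well-defined real number and each $\hat{\imath}_s$ exists.

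First, I will show that $\max_{i\in[V]} x_i=\max_s m_s$ by proving both inequalities.
\begin{itemize}
\item For $\ge$: for every $s$, $m_s=x_{\hat{\imath}_s}\le \max_{i\in[V]}x_i$, since $\hat{\imath}_s\in[V]$. Taking the maximum over $s$ gives the bound.
\item For $\le$: let $i^\dagger$ be any global maximizer. Since the tiles cover $[V]$, there is some $s$ with $i^\dagger\in\mathcal{V}_s$. Then $\max_i x_i=x_{i^\dagger}\le m_s\le \max_{s'} m_{s'}$.
\end{itemize}

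Second, take any $s^\star\in\argmax_s m_s$. Then $x_{\hat{\imath}_{s^\star}}=m_{s^\star}=\max_s m_s=\max_{i\in[V]}x_i$ by the first step, so $\hat{\imath}_{s^\star}$ is a global maximizer.

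Third, for the converse, let $i^\dagger$ be a global maximizer. By the partition property it lies in a unique tile $\mathcal{V}_s$. As in the $\le$ direction, $m_s\ge x_{i^\dagger}=\max_i x_i=\max_{s'}m_{s'}$, and trivially $m_s\le\max_{s'}m_{s'}$. Hence $s\in\argmax_s m_s$.

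There is no real obstacle here. The only point needing care is the bookkeeping with ties: the statement is phrased with set-valued $\argmax$, and I will keep it that way so that no tie-breaking convention is required. A remark could note that with continuous Gumbel perturbations, ties occur with probability zero, which lets Algorithm~\ref{alg:fused} inherit the Gumbel-Max theorem pathwise.
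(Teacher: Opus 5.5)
Your proposal is correct and follows essentially the same route as the paper: the max identity via decomposition over the partition (you spell it out as two inequalities where the paper calls it immediate), then $x_{\hat{\imath}_{s^\star}}=m_{s^\star}=\max_i x_i$ for the forward direction, and the tile of a global maximizer attaining $\max_s m_s$ for the converse. Your extra care about empty tiles and set-valued $\argmax$ is harmless bookkeeping that the paper leaves implicit.
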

\begin{proof}
The identity for the maximum value is immediate:
\[
\max_{i\in[V]} x_i
=
\max_s \max_{i\in\mathcal{V}_s} x_i
=
\max_s m_s.
\]
If $s^\star\in\argmax_s m_s$, then $x_{\hat{\imath}_{s^\star}}=m_{s^\star}=\max_i x_i$, so $\hat{\imath}_{s^\star}$ is a global maximizer. Conversely, if $i^\star$ is any global maximizer, then its tile $s^\star$ satisfies $m_{s^\star}=x_{i^\star}=\max_i x_i$, hence $s^\star\in\argmax_s m_s$.
\end{proof}

Applying Lemma~\ref{lem:max_tiles} to $x_i=\tilde{\ell}_i+g_i$ justifies the two-stage fused design in Algorithm~\ref{alg:fused}. Because the Gumbel variables are continuous, the global maximizer is unique almost surely, so the tile-wise reduction returns exactly the same index as a full row-wise $\argmax$ with probability one.

\subsection{Top-\texorpdfstring{$k$}{k}, Nucleus Sampling, and Masking}
Practical decoding often uses truncated supports, and the tiled structure of FlashSampling naturally accommodates most of them.

\begin{itemize}
  \item \textbf{Top-$k$:} The Group-Gumbel-Max decomposition extends directly to top-$k$ via the Gumbel-Top-$k$ trick \citep{kool2019stochastic}. Each tile computes top-$k$ candidates locally (logits and indices), and a second stage reduces all per-tile candidates into a global top-$k$. Sampling from the final $k$ candidates can be done with multinomial or Gumbel-Max sampling.
  \item \textbf{Top-$p$ (nucleus):} Unlike top-$k$, nucleus sampling~\citep{holtzman2019curious} requires a global softmax followed by a sorted cumulative sum, neither of which decomposes into independent tile-local work. However, top-$p$ can be applied \emph{after} top-$k$ on the reduced candidate set of only $k$ elements, where softmax, sorting, and cumulative summation are negligible. This sequential top-$k$-then-top-$p$ strategy is used in practice by vLLM\footnote{\url{https://github.com/vllm-project/vllm/blob/v0.16.0/vllm/v1/sample/ops/topk_topp_sampler.py\#L264-L279}}\textsuperscript{,}\footnote{\url{https://github.com/vllm-project/vllm/blob/v0.16.1rc0/vllm/v1/sample/ops/topk_topp_triton.py\#L956}}, FlashInfer\footnote{\url{https://github.com/flashinfer-ai/flashinfer/blob/v0.6.3/flashinfer/sampling.py\#L1069-L1072}}, and other SOTA top-$k$ top-$p$ algorithms~\citep{park2026qritahighperformancetopktopp}.
  \item \textbf{Masking:} Forbidden indices (e.g.\ banned tokens, grammar constraints) are supported by setting their logits to $-\infty$ before perturbation, which preserves exactness over the restricted support.
\end{itemize}

\noindent While the FlashSampling theory allows integrating these sampling strategies, we leave the implementation to future work.

\section{Kernel Microbenchmark Data}
\label{app:large_config}

For completeness, Table~\ref{tab:speedup_kernel} reports the smaller-configuration kernel results deferred from the main text, and Table~\ref{tab:speedup_large} reports the larger configuration. The same qualitative pattern appears in both: FlashSampling is strongest in the small-batch decode regime, while the advantage narrows once the workload becomes more GEMM-efficiency dominated.

\begin{table}[htbp]
\centering
\small
\setlength{\tabcolsep}{3.8pt}
\renewcommand{\arraystretch}{0.97}
\caption{FlashSampling relative speedup vs.\ three baselines on the smaller configuration ($D{=}4096$, $V{=}151\text{k}$). Values $>1$ indicate FlashSampling is faster; bold marks the peak per GPU within each baseline. The numbers are medians over 100 iterations. Each column takes under 10 minutes to run.}
\label{tab:speedup_kernel}
\begin{tabular}{l cccc cccc cccc}
\toprule
& \multicolumn{4}{c}{\emph{vs.\ Multinomial Sampling}} & \multicolumn{4}{c}{\emph{vs.\ FI1} (top-$k$/top-$p$)} & \multicolumn{4}{c}{\emph{vs.\ FI2} (Gumbel-Max)} \\
\cmidrule(lr){2-5}\cmidrule(lr){6-9}\cmidrule(lr){10-13}
$B$ & H100 & H200 & B200 & B300 & H100 & H200 & B200 & B300 & H100 & H200 & B200 & B300 \\
\midrule
1   & 1.29 & 1.35 & 1.58 & 1.55 & 1.34 & 1.39 & 1.65 & 1.57 & 1.20 & 1.24 & 1.35 & 1.33 \\
2   & 1.28 & 1.34 & 1.52 & 1.50 & 1.32 & 1.44 & 1.58 & 1.56 & 1.19 & 1.22 & 1.32 & 1.29 \\
4   & 1.28 & 1.35 & 1.58 & 1.54 & 1.32 & 1.42 & 1.62 & 1.61 & 1.19 & 1.23 & 1.37 & 1.32 \\
8   & 1.31 & 1.38 & 1.58 & 1.55 & 1.35 & 1.48 & 1.63 & 1.60 & 1.19 & 1.24 & 1.37 & 1.33 \\
16  & 1.35 & 1.44 & 1.66 & 1.63 & 1.37 & 1.49 & 1.70 & 1.65 & 1.20 & 1.25 & 1.39 & 1.35 \\
32  & 1.44 & 1.53 & 1.79 & 1.74 & 1.39 & \textbf{1.50} & 1.70 & 1.67 & 1.22 & \textbf{1.27} & 1.42 & 1.37 \\
64  & 1.64 & \textbf{1.67} & 1.96 & 1.92 & \textbf{1.47} & 1.49 & \textbf{1.74} & 1.69 & \textbf{1.27} & 1.25 & \textbf{1.43} & 1.39 \\
128 & \textbf{1.67} & 1.43 & \textbf{2.14} & \textbf{2.23} & 1.36 & 1.15 & 1.68 & \textbf{1.74} & 1.14 & 0.94 & 1.39 & \textbf{1.44} \\
256 & 1.30 & 1.22 & 1.84 & 2.03 & 1.03 & 1.00 & 1.54 & 1.65 & 0.81 & 0.79 & 1.19 & 1.30 \\
\bottomrule
\end{tabular}
\end{table}

\begin{table}[ht]
\centering
\small
\setlength{\tabcolsep}{3.8pt}
\renewcommand{\arraystretch}{0.97}
\caption{FlashSampling speedup vs.\ three baselines on the larger configuration ($D{=}8192$, $V{=}128\text{k}$). Values $>1$ indicate FlashSampling is faster; bold marks the peak per GPU within each baseline. At $B{\ge}128$ the advantage narrows and cuBLAS GEMM efficiency becomes increasingly important.}
\label{tab:speedup_large}
\begin{tabular}{l cccc cccc cccc}
\toprule
& \multicolumn{4}{c}{\emph{vs.\ Multinomial Sampling}} & \multicolumn{4}{c}{\emph{vs.\ FI1} (top-$k$/top-$p$)} & \multicolumn{4}{c}{\emph{vs.\ FI2} (Gumbel-Max)} \\
\cmidrule(lr){2-5}\cmidrule(lr){6-9}\cmidrule(lr){10-13}
$B$ & H100 & H200 & B200 & B300 & H100 & H200 & B200 & B300 & H100 & H200 & B200 & B300 \\
\midrule
1   & 1.23 & 1.26 & 1.45 & 1.41 & 1.20 & 1.25 & 1.36 & 1.30 & 1.14 & 1.13 & 1.21 & 1.19 \\
2   & 1.22 & 1.23 & 1.41 & 1.39 & 1.21 & 1.21 & 1.33 & 1.29 & 1.13 & 1.11 & 1.21 & 1.18 \\
4   & 1.22 & 1.24 & 1.35 & 1.34 & 1.20 & 1.21 & 1.31 & 1.28 & 1.13 & 1.12 & 1.15 & 1.13 \\
8   & 1.23 & 1.25 & 1.37 & 1.36 & 1.21 & 1.23 & 1.30 & 1.29 & 1.13 & 1.12 & 1.14 & 1.13 \\
16  & 1.24 & 1.27 & 1.40 & 1.39 & 1.21 & 1.25 & 1.32 & 1.30 & 1.14 & 1.13 & 1.15 & 1.15 \\
32  & 1.27 & 1.34 & 1.47 & 1.45 & 1.23 & 1.30 & 1.36 & 1.35 & 1.15 & 1.18 & 1.20 & 1.19 \\
64  & \textbf{1.36} & \textbf{1.43} & \textbf{1.61} & 1.58 & \textbf{1.28} & \textbf{1.33} & \textbf{1.46} & \textbf{1.44} & \textbf{1.19} & \textbf{1.20} & \textbf{1.30} & \textbf{1.28} \\
128 & 1.30 & 1.01 & 1.59 & \textbf{1.61} & 1.16 & 0.88 & 1.34 & 1.39 & 1.05 & 0.78 & 1.19 & 1.23 \\
256 & 0.88 & 0.82 & 1.27 & 1.32 & 0.80 & 0.74 & 1.12 & 1.19 & 0.69 & 0.65 & 0.97 & 1.02 \\
\bottomrule
\end{tabular}
\end{table}

\section{Multi-GPU Runtime Values}
\label{app:tp_runtimes}
Table~\ref{tab:tp_runtimes} reports the minimum kernel runtime ($\mu$s) underlying Figure~\ref{fig:tp_scaling}, taken as the lowest of 5 runs of 100 iterations each.
The configuration is the larger one ($D{=}8192$, $V{=}128\text{k}$).
FlashSampling attains the lowest runtime in every cell except $(B{=}256, \text{TP}{=}1)$, where FI2 is marginally faster.

\begin{table}[ht]
\centering
\small
\setlength{\tabcolsep}{6pt}
\caption{Minimum kernel runtime ($\mu$s, lower is better) across tensor-parallel sizes $\text{TP}\in\{1,2,4,8\}$ and batch sizes $B\in\{16,64,256\}$ for input shape ($D{=}8192$, $V{=}128\text{k}$). Bold marks the fastest method per ($B$, TP) cell.}
\label{tab:tp_runtimes}
\begin{tabular}{l l rrrr}
\toprule
$B$ & Method & TP=1 & TP=2 & TP=4 & TP=8 \\
\midrule
\multirow{4}{*}{16}
 & FlashSampling   & \textbf{333.8} & \textbf{203.8} & \textbf{125.8} & \textbf{87.1}  \\
 & FI1             & 438.3 & 321.5 & 254.1 & 215.1 \\
 & FI2             & 385.0 & 268.4 & 200.7 & 164.2 \\
 & Multinomial     & 461.8 & 344.1 & 274.5 & 241.7 \\
\midrule
\multirow{4}{*}{64}
 & FlashSampling   & \textbf{344.1} & \textbf{215.9} & \textbf{136.2} & \textbf{96.4}  \\
 & FI1             & 494.8 & 383.1 & 313.3 & 258.0 \\
 & FI2             & 439.2 & 329.7 & 257.9 & 203.8 \\
 & Multinomial     & 536.6 & 432.1 & 354.3 & 313.3 \\
\midrule
\multirow{4}{*}{256}
 & FlashSampling   & 676.8 & \textbf{359.5} & \textbf{202.7} & \textbf{149.5} \\
 & FI1             & 793.7 & 669.7 & 590.8 & 564.7 \\
 & FI2             & \textbf{669.7} & 565.3 & 473.6 & 450.6 \\
 & Multinomial     & 852.9 & 764.5 & 681.0 & 689.4 \\
\bottomrule
\end{tabular}
\end{table}

\section{Absolute/Relative TPOT Results for vLLM Evaluation}
\label{app:vllm_absolute}
Table~\ref{tab:vllm_tpot_abs} reports the median TPOT (ms) for baseline and FlashSampling on B200, complementing the relative speedups in Table~\ref{tab:vllm_tpot}.
Smaller models (Qwen3-1.7B, Qwen3-8B) run on a single GPU (TP1); larger models (Qwen3-32B, Llama-3.3-70B) run on two GPUs (TP2).

\begin{table}[ht]
\centering
\small
\setlength{\tabcolsep}{4pt}
\caption{Median TPOT (ms) over 5 runs on B200 for baseline and FlashSampling. Lower is better. $B$ is the batch size.}
\label{tab:vllm_tpot_abs}
\begin{tabular}{l cc cc cc cc}
\toprule
& \multicolumn{2}{c}{Qwen3-1.7B (TP1)} & \multicolumn{2}{c}{Qwen3-8B (TP1)} & \multicolumn{2}{c}{Qwen3-32B (TP2)} & \multicolumn{2}{c}{Llama-3.3-70B (TP2)} \\
\cmidrule(lr){2-3}\cmidrule(lr){4-5}\cmidrule(lr){6-7}\cmidrule(lr){8-9}
$B$ & Base & FlashSampling & Base & FlashSampling & Base & FlashSampling & Base & FlashSampling \\
\midrule
1   & 1.82 & 1.68 & 3.91 & 3.62 & 7.73 & 7.64 & 14.21 & 14.10 \\
2   & 1.81 & 1.67 & 3.93 & 3.61 & 7.85 & 7.66 & 14.38 & 14.24 \\
4   & 1.81 & 1.66 & 4.01 & 3.68 & 7.91 & 7.79 & 14.60 & 14.42 \\
8   & 1.83 & 1.67 & 4.06 & 3.74 & 8.00 & 7.82 & 14.77 & 14.62 \\
16  & 1.84 & 1.68 & 4.24 & 3.87 & 8.30 & 8.08 & 15.15 & 14.76 \\
32  & 1.91 & 1.71 & 4.46 & 4.09 & 8.72 & 8.48 & 15.75 & 15.33 \\
64  & 2.14 & 2.13 & 5.01 & 4.63 & 9.66 & 9.39 & 17.81 & 17.35 \\
\bottomrule
\end{tabular}
\end{table}

\begin{table}[htbp]
\centering
\caption{TPOT speedup in \% (larger is better) computed as ($1 - \text{FlashSampling}/\text{baseline}$), and standard deviation across 5 runs.
$B$ is the batch size.
Bold marks the peak per model. Absolute TPOT values are in Appendix~\ref{app:vllm_absolute}.
Each column takes under 2 hours to run.}
\label{tab:vllm_tpot}
\begin{tabular}{l rrrr}
\toprule
$B$ & Qwen3-1.7B (TP1) & Qwen3-8B (TP1) & Qwen3-32B (TP2) & Llama-3.3-70B (TP2) \\
\midrule
1   & $7.5 \pm 0.2$\,\% & $7.3 \pm 0.2$\,\% & $1.2 \pm 0.0$\,\% & $0.8 \pm 0.0$\,\% \\
2   & $7.7 \pm 0.1$\,\% & $7.9 \pm 0.2$\,\% & $2.3 \pm 0.1$\,\% & $1.0 \pm 0.0$\,\% \\
4   & $8.3 \pm 0.0$\,\% & $8.1 \pm 0.1$\,\% & $1.5 \pm 0.0$\,\% & $1.2 \pm 0.0$\,\% \\
8   & $8.5 \pm 0.0$\,\% & $8.1 \pm 0.1$\,\% & $2.3 \pm 0.0$\,\% & $1.0 \pm 0.0$\,\% \\
16  & $8.8 \pm 0.1$\,\% & $\mathbf{8.7} \pm 0.3$\,\% & $2.7 \pm 0.0$\,\% & $2.5 \pm 0.3$\,\% \\
32  & $\mathbf{10.2} \pm 0.8$\,\% & $8.5 \pm 0.3$\,\% & $\mathbf{2.9} \pm 0.2$\,\% & $\mathbf{2.7} \pm 0.1$\,\% \\
64  & $0.4 \pm 3.6$\,\% & $8.0 \pm 0.5$\,\% & $2.8 \pm 0.3$\,\% & $2.5 \pm 0.5$\,\% \\
\bottomrule
\end{tabular}
\end{table}

\FloatBarrier
\section{Roofline Analysis and Bandwidth Utilization}
\label{sec:roofline}

The LM-head projection is memory-bandwidth-bound at small batch sizes because arithmetic intensity equals $B$ (the weight matrix dominates traffic).
Figure~\ref{fig:roofline} confirms this on B200.

\begin{figure}[htbp]
\centering
\includegraphics[width=0.47\columnwidth]{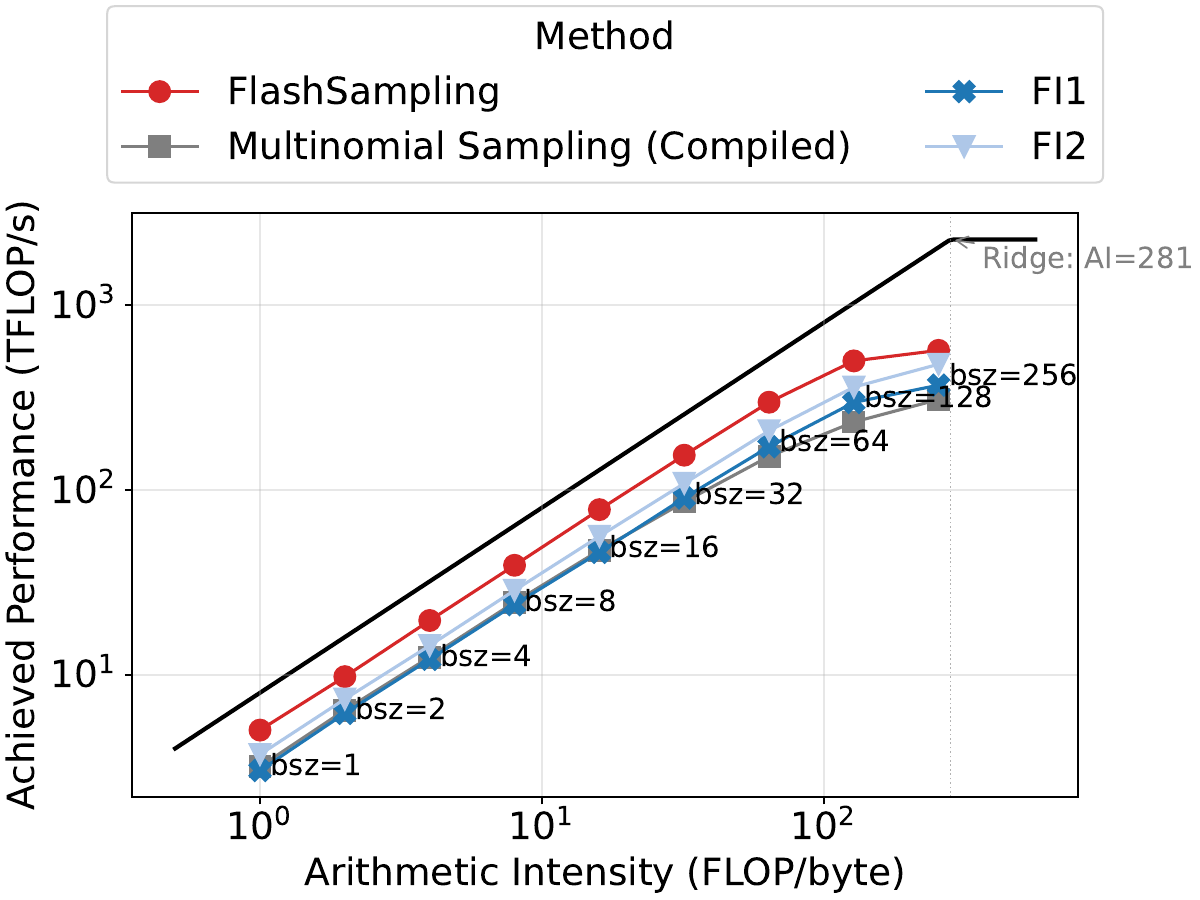}%
\includegraphics[width=0.53\columnwidth]{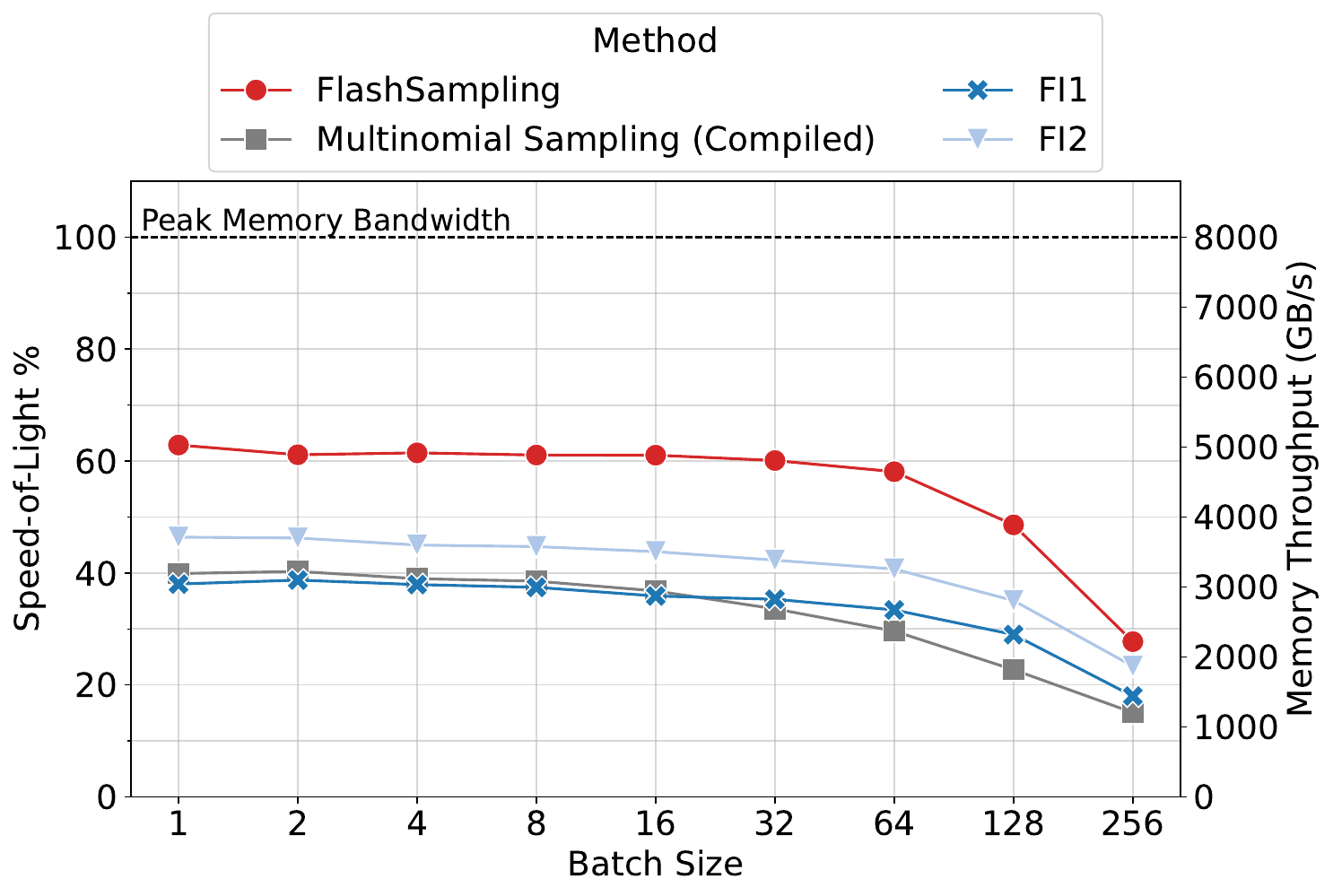}%
\caption{Roofline (left) and HBM bandwidth utilization (right) on B200 (higher is better).
Left: all methods track the memory-bound slope for $B \le 64$; FlashSampling sits above baselines because it avoids the logits round-trip.
Close to the ridge point ($\mathrm{AI} \approx 281$), performance flattens below the compute ceiling, where cuBLAS outperforms Triton.
Right: FlashSampling achieves higher bandwidth utilization than all baselines in the decode regime, confirming that fusion removes overhead rather than shifting it.}
\label{fig:roofline}
\end{figure}

\FloatBarrier

\IfFileExists{contents/appendix.tex}{\section{FlashSampling Algorithm Pseudocode}
\label{app:algorithms}

This appendix collects detailed pseudocode for the FlashSampling variants described in the main text.

\paragraph{Streaming Gumbel-Max (standalone logits).}
Algorithm~\ref{alg:streaming} presents the basic one-pass streaming Gumbel-Max sampler over pre-materialized logits.

\begin{algorithm}[ht!]
\caption{Gumbel-Max sampling (standalone logits): streaming argmax over perturbed logits}
\label{alg:streaming}
\begin{algorithmic}[1]
\Require Logits $\vct{\ell}\in\R^V$, RNG state
\Ensure Sample index $i^\star\in\{1,\dots,V\}$
\State $m \gets -\infty$, $i^\star \gets 1$
\For{$i=1$ \textbf{to} $V$}
  \State $g \gets \textsc{Gumbel}(0,1)$ \Comment{via $g=-\log(-\log u)$, $u\sim \Unif(0,1)$}
  \State $s \gets \ell_i + g$
  \If{$s > m$}
    \State $m \gets s$, $i^\star \gets i$
  \EndIf
\EndFor
\State \Return $i^\star$
\end{algorithmic}
\end{algorithm}

\paragraph{Parallel Group-Gumbel-Max.}
Algorithm~\ref{alg:flash_parallel} extends streaming Gumbel-Max to a group-parallel setting where each group is processed by an independent threadblock.

\begin{algorithm}[ht!]
\caption{FlashSampling (parallel): Group-Gumbel-Max over groups}
\label{alg:flash_parallel}
\begin{algorithmic}[1]
\Require Input $\vct{x}\in\R^d$, weight matrix $\mat{W}\in\R^{d\times V}$, group size $g$ (so $V=mg$), RNG state
\Ensure Sample index $z\in\{1,\dots,V\}$ and optional log-normalizer $\ell_Z=\lse(\vct{y})$
\For{$k=0$ \textbf{to} $m-1$ \textbf{in parallel}}
  \State $\vct{y}_k \gets \mat{W}_k^\top \vct{x}\in\R^g$
  \State $z_k \gets \argmax_{j\in[g]} \big(y_{k,j} - \log(-\log u_{k,j})\big)$ \Comment{$u_{k,j}\!\sim\!\Unif(0,1)$}
  \State $L_k \gets \lse(\vct{y}_k)$
\EndFor
\State $k^\star \gets \argmax_{k\in[m]} \big(L_k - \log(-\log \bar u_k)\big)$ \Comment{$\bar u_k\!\sim\!\Unif(0,1)$}
\State $z \gets k^\star g + z_{k^\star}$ \Comment{map group-local index to global vocabulary index}
\State $\ell_Z \gets \lse([L_0,\dots,L_{m-1}])$ \Comment{optional}
\State \Return $(z,\ell_Z)$
\end{algorithmic}
\end{algorithm}

\paragraph{Sequential/online Group-Gumbel-Max.}
Algorithm~\ref{alg:flash_seq} provides a memory-efficient variant that streams groups one at a time.

\begin{algorithm}[ht!]
\caption{FlashSampling (sequential/online): streaming Group-Gumbel-Max with $O(g)$ working memory}
\label{alg:flash_seq}
\begin{algorithmic}[1]
\Require Input $\vct{x}\in\R^d$, weight matrix $\mat{W}\in\R^{d\times V}$, group size $g$ (so $V=mg$), RNG state
\Ensure Sample index $z\in\{1,\dots,V\}$ and optional log-normalizer $\ell_Z$
\Statex \textbf{Initialize with the first group.}
\State $\vct{y}_0 \gets \mat{W}_0^\top \vct{x}\in\R^g$
\State $L_0 \gets \lse(\vct{y}_0)$
\State $z_0 \gets \argmax_{j\in[g]} \big(y_{0,j} - \log(-\log u_{0,j})\big)$ \Comment{$u_{0,j}\!\sim\!\Unif(0,1)$}
\State $z \gets z_0$, $\ell \gets L_0$
\For{$k=1$ \textbf{to} $m-1$}
  \State $\vct{y}_k \gets \mat{W}_k^\top \vct{x}\in\R^g$
  \State $L_k \gets \lse(\vct{y}_k)$
  \State $\ell_{\text{new}} \gets \lse([\ell,\, L_k])$
  \State $p_{\text{replace}} \gets \exp(L_k - \ell_{\text{new}})$ \Comment{$=\frac{e^{L_k}}{e^\ell+e^{L_k}}$}
  \State Draw $u \sim \Unif(0,1)$
  \If{$u < p_{\text{replace}}$}
    \State $z_k \gets \argmax_{j\in[g]} \big(y_{k,j} - \log(-\log u_{k,j})\big)$ \Comment{sample within selected group}
    \State $z \gets k g + z_k$
  \EndIf
  \State $\ell \gets \ell_{\text{new}}$
\EndFor
\State $\ell_Z \gets \ell$ \Comment{optional}
\State \Return $(z,\ell_Z)$
\end{algorithmic}
\end{algorithm}

\paragraph{Distributed Group-Gumbel-Max.}
Algorithm~\ref{alg:flash_dist} extends FlashSampling to tensor-parallel vocabularies sharded across multiple GPUs.

\begin{algorithm}[ht!]
\caption{FlashSampling (distributed, tensor-parallel vocab): communicate $O(1)$ scalars per rank}
\label{alg:flash_dist}
\begin{algorithmic}[1]
\Require World size $n$. Rank $k\in\{0,\dots,n-1\}$ holds shard $\mat{W}^{(k)}\in\R^{d\times (V/n)}$ covering vocab indices $\{k\cdot V/n + 1,\dots,(k+1)\cdot V/n\}$. Input $\vct{x}\in\R^d$, RNG state.
\Ensure Global sample index $z\in\{1,\dots,V\}$ (and optional $\ell_Z$)
\State On each rank $k$:
\Statex \hspace{1em} compute local logits $\vct{y}^{(k)} \gets (\mat{W}^{(k)})^\top \vct{x}\in\R^{V/n}$
\Statex \hspace{1em} compute local log-mass $L_k \gets \lse(\vct{y}^{(k)})$
\Statex \hspace{1em} sample local index $\tilde z_k \sim \cat(\softmax(\vct{y}^{(k)}))$ \Comment{e.g., via Gumbel-Max / Group-Gumbel-Max / fused kernel}
\State All-gather $\{(L_k,\tilde z_k)\}_{k=0}^{n-1}$ to a coordinator (or perform an equivalent reduction)
\State Sample winning rank $k^\star \gets \argmax_{k\in[n]} \big(L_k - \log(-\log \bar u_k)\big)$ \Comment{$\bar u_k\!\sim\!\Unif(0,1)$}
\State $z \gets k^\star\cdot (V/n) + \tilde z_{k^\star}$ \Comment{convert rank-local index to global}
\State Optionally $\ell_Z \gets \lse([L_0,\dots,L_{n-1}])$
\State \Return $z$ (and $\ell_Z$)
\end{algorithmic}
\end{algorithm}

\section{Numerically Stable and Fast Gumbel Generation}
\label{app:gumbel_generation}

Gumbel noise can be generated as $g=-\log(-\log u)$ with $u\sim \Unif(0,1)$.
In GPU kernels, two issues matter:
\begin{itemize}
  \item \textbf{Numerical stability:} avoid $u=0$ or $u=1$ which lead to infinities.
  \item \textbf{Throughput:} the cost of generating random numbers and computing logs should not dominate.
\end{itemize}

\paragraph{Practical recipe.}
Given a 32-bit RNG output $r\in\{0,\dots,2^{32}-1\}$, map to
\[
u = \frac{r+1}{2^{32}+1} \in (0,1),
\]
then compute $g=-\log(-\log u)$. Many GPU RNG libraries (e.g.\ Philox, XORWOW) support generating floats in $(0,1)$ directly; the above mapping is a safe fallback.

\paragraph{Approximate log options.}
If exactness in the distribution is required, the Gumbel generation must be statistically correct. However, using fast approximate log implementations can introduce small distortions. FlashSampling supports two modes:
\begin{itemize}
  \item \textbf{Exact-math mode:} use standard $\log$ for high fidelity.
  \item \textbf{Fast-math mode:} use approximate logs for speed, with empirical validation that sampling bias remains negligible for target applications.
\end{itemize}
The sampling remains \emph{algorithmically exact} with respect to the generated Gumbels; any bias comes from numeric approximations.

\section{Logits-Store Ablation}
\label{app:logits_ablation}

Table~\ref{tab:logits_ablation} reports the overhead of storing the computed $[B,V]$ logits tensor (FP32) back to HBM from within the fused kernel, compared with the predicted overhead of $2B/D$ from the cost model in Section~\ref{sec:cost_model}.
The ablation toggles a single flag in the kernel and changes nothing else.
The measured overhead was slightly larger than predicted, but tracked the trend closely.

\begin{table}[htbp]
\centering
\caption{Predicted vs.\ measured overhead of storing logits to HBM. Predicted: $2B/D$. Measured: relative slowdown of the fused kernel with the logits store enabled, averaged over 5 runs (B200 GPU).}
\label{tab:logits_ablation}
\begin{tabular}{l rr rr}
\toprule
& \multicolumn{2}{c}{$D{=}8192$, $V{=}128\text{k}$} & \multicolumn{2}{c}{$D{=}4096$, $V{=}152\text{k}$} \\
\cmidrule(lr){2-3}\cmidrule(lr){4-5}
$B$ & Predicted & Measured & Predicted & Measured \\
\midrule
1   & 0.02\% &  0.5\% & 0.05\% &  0.9\% \\
4   & 0.10\% &  0.7\% & 0.20\% &  1.2\% \\
16  & 0.39\% &  1.6\% & 0.78\% &  2.1\% \\
64  & 1.56\% &  3.6\% & 3.13\% &  4.8\% \\
128 & 3.13\% &  5.4\% & 6.25\% &  8.9\% \\
256 & 6.25\% &  8.1\% & 12.50\% & 15.2\% \\
\bottomrule
\end{tabular}
\end{table}

\section{Returning Log-Normalizers or Max Values}
\label{app:log_normalizers}

Some applications need $\log Z = \log\sum_j e^{\tilde{\ell}_j}$, for example to compute log-probabilities.
The core FlashSampling sampler does not need $\log Z$, but it can be added as an optional mode by accumulating a numerically stable log-sum-exp alongside sampling.
In fused settings, this requires extra work in the epilogue, so we treat it as an optional feature rather than part of the core design.
}{}

\section{Licenses of Existing Assets}
\label{app:licenses}
Table~\ref{tab:licenses} lists the third-party software, models, and datasets used in this paper, with citation and license.
All assets are used in accordance with their respective licenses.

\begin{table}[ht!]
\centering
\caption{Licenses of existing assets used in this paper.}
\label{tab:licenses}
\begin{tabular}{lll}
\toprule
Asset & Reference & License \\
\midrule
PyTorch & \citet{paszke2019pytorch} & \href{https://github.com/pytorch/pytorch/blob/main/LICENSE}{BSD-3-Clause} \\
Triton & \citet{tillet2019triton} & \href{https://github.com/triton-lang/triton/blob/main/LICENSE}{MIT} \\
vLLM & \citet{kwon2023efficient} & \href{https://github.com/vllm-project/vllm/blob/main/LICENSE}{Apache-2.0} \\
FlashInfer & \citet{ye2025flashinfer} & \href{https://github.com/flashinfer-ai/flashinfer/blob/main/LICENSE}{Apache-2.0} \\
Qwen3 (1.7B, 8B, 32B) & \citet{yang2025qwen3} & \href{https://huggingface.co/Qwen/Qwen3-8B/blob/main/LICENSE}{Apache-2.0} \\
Llama-3.3-70B & \citet{grattafiori2024llama} & \href{https://www.llama.com/llama3_3/license/}{Llama 3.3 Community License} \\
GSM8K & \citet{cobbe2021training} & \href{https://github.com/openai/grade-school-math/blob/master/LICENSE}{MIT} \\
\bottomrule
\end{tabular}
\end{table}

\end{document}